\newcommand{\cmark}{\ding{51}}%
\newcommand{\xmark}{\ding{55}}%
\xpatchcmd{\proof}{\hskip\labelsep}{\hskip3\labelsep}{}{} 
\newtheorem{theorem}{Theorem}
\begin{document}

\title{Scaled Simplex Representation for Subspace Clustering}

\author{
\IEEEauthorblockN{
Jun Xu$^{1,2}$,
Mengyang Yu$^{2}$,
Ling Shao$^{2}$,~\IEEEmembership{Senior Member,~IEEE},
Wangmeng Zuo$^{3}$,~\IEEEmembership{Senior Member,~IEEE},
\\
Deyu Meng$^{4}$,
Lei Zhang\textsuperscript{5},~\IEEEmembership{Fellow,~IEEE},
David Zhang\textsuperscript{5,6},~\IEEEmembership{Fellow,~IEEE}
}
\IEEEauthorblockA{
$^{1}$College of Computer Science, Nankai Univeristy, Tianjin, China
\\
$^{2}$Inception Institute of Artificial Intelligence, Abu Dhabi, UAE
\\
$^{3}$School of Computer Science and Technology, Harbin Institute of Technology, Harbin, China
\\
$^{4}$School of Mathematics and Statistics, Xi'an Jiaotong University, Xi'an, China
\\
$^{5}$Department of Computing,
The Hong Kong Polytechnic University, Hong Kong SAR, China
\\
$^{6}$School of Science and Engineering, The Chinese University of Hong Kong (Shenzhen), Shenzhen, China
}
\thanks{Corresponding author: Jun Xu (email: nankaimathxujun@gmail.com).}
}

\maketitle

\begin{abstract}
The self-expressive property of data points, i.e., each data point can be linearly represented by the other data points in the same subspace, has proven effective in leading subspace clustering methods.\ Most self-expressive methods usually construct a feasible affinity matrix from a coefficient matrix, obtained by solving an optimization problem.\ However, the negative entries in the coefficient matrix are forced to be positive when constructing the affinity matrix via exponentiation, absolute symmetrization, or squaring operations.\ This consequently damages the inherent correlations among the data.\ Besides, the affine constraint used in these methods is not flexible enough for practical applications.\ To overcome these problems, in this paper, we introduce a scaled simplex representation (SSR) for subspace clustering problem.\ Specifically, the non-negative constraint is used to make the coefficient matrix physically meaningful, and the coefficient vector is constrained to be summed up to a scalar $s<1$ to make it more discriminative.\ The proposed SSR based subspace clustering (SSRSC) model is reformulated as a linear equality-constrained problem, which is solved efficiently under the alternating direction method of multipliers framework.\ Experiments on benchmark datasets demonstrate that the proposed SSRSC algorithm is very efficient and outperforms state-of-the-art subspace clustering methods on accuracy.\ The code can be found at \url{https://github.com/csjunxu/SSRSC}.
\end{abstract}

\begin{IEEEkeywords}
Subspace clustering, 
scaled simplex representation, 
self-expressiveness.
\end{IEEEkeywords}

\section{Introduction}

\IEEEPARstart{I}{mage} processing problems often contain high-dimensional data, whose structure typically lie in a union of low-dimensional subspaces~\cite{vidalsc}.\ Recovering these low-dimensional subspaces from the high-dimensional data can reduce the computational and memory cost of subsequent algorithms.\ To this end, many image processing tasks require to cluster high-dimensional data in a way that each cluster can be fitted by a low-dimensional subspace.\ This problem is known as subspace clustering (SC)~\cite{vidalsc}.

SC has been extensively studied over the past few decades~\cite{Ksubspaces,asc,multiplemotion,gpca,mppca,msl,ransac,alc,lsa,slbf,llmc,scc,rsim,ssccvpr,sscpami,lrricml,lrrpami,chentoc2014,fangtoc2016,wangtoc2017,pengtoc2017,wentoc2018,wentoc2019,leetoc2018,brbictoc2018,zhangtoc2019,yin2015dual,lrsc,Karavasilis,lsr,smr,sssc,bd,wutoc2016,mgr,s3c,s3ctip,rssc,nvr3,ji2017deep,alrg2018,yang2018automatic,sscomp,you2016oracle,you2017provable,You2018ECCV}.\ Most existing SC methods fall into four categories: iterative methods~\cite{Ksubspaces,asc}, algebraic methods~\cite{multiplemotion,gpca,rsim}, statistical methods~\cite{ransac,mppca,msl,alc}, and self-expressive methods~\cite{lsa,slbf,llmc,scc,ssccvpr,sscpami,lrricml,lrrpami,chentoc2014,fangtoc2016,wangtoc2017,pengtoc2017,wentoc2018,wentoc2019,leetoc2018,brbictoc2018,zhangtoc2019,yin2015dual,lrsc,Karavasilis,lsr,smr,sssc,bd,wutoc2016,mgr,s3c,s3ctip,rssc,nvr3,ji2017deep,alrg2018,yang2018automatic,sscomp,you2016oracle,you2017provable,You2018ECCV}.\ Among these methods, self-expressive ones are most widely studied due to their theoretical soundness and promising performance in real-world applications, such as motion segmentation~\cite{sscpami}, face clustering~\cite{lrrpami}, and digit clustering~\cite{sscomp}.\ Self-expressive methods usually follow a three-step framework.~\textbf{Step 1}: a coefficient matrix is obtained for the data points by solving an optimization problem.
\textbf{Step 2}: an affinity matrix is constructed from the coefficient matrix by employing exponentiation~\cite{scc}, absolute symmetrization~\cite{ssccvpr,sscpami,lrsc,lsr,sssc,smr,bd,wutoc2016,mgr,s3c,s3ctip,rssc,sscomp}, and squaring operations~\cite{lrricml,lrrpami,chentoc2014,fangtoc2016,wangtoc2017,pengtoc2017,wentoc2018,wentoc2019,brbictoc2018,zhangtoc2019,yin2015dual,nvr3,alrg2018}, etc. \textbf{Step 3}: spectral techniques~\cite{von2007tutorial} are applied to the affinity matrix and the final clusters are obtained for the data points. 

Self-expressive methods~\cite{scc,ssccvpr,sscpami,lrricml,lrrpami,chentoc2014,fangtoc2016,wangtoc2017,pengtoc2017,wentoc2018,wentoc2019,brbictoc2018,zhangtoc2019,yin2015dual,lrsc,Karavasilis,lsr,smr,sssc,bd,mgr,s3c,s3ctip,rssc,nvr3,ji2017deep,alrg2018,yang2018automatic,sscomp,you2016oracle,you2017provable,You2018ECCV} obtain the coefficient matrix under the \textsl{self-expressiveness} property~\cite{ssccvpr}: each data point in a union of multiple subspaces can be \textsl{linearly} represented by the other data points in the same subspace.\ In real-world applications, data points often lie in a union of multiple affine rather than linear subspaces~\cite{sscpami}, and hence the affine constraint is introduced~\cite{ssccvpr,sscpami} to constrain the sum of coefficients to be 1.\ However, most self-expressive methods~\cite{scc,ssccvpr,sscpami,lrricml,lrrpami,yin2015dual,lrsc,Karavasilis,lsr,smr,sssc,bd,mgr,s3c,s3ctip,rssc,sscomp,nvr3,you2017provable,ji2017deep,alrg2018,yang2018automatic,You2018ECCV} suffer from three major drawbacks.\ First, negative coefficients cannot be explicitly avoided in these methods in \textbf{Step 1}.\ 
But it is physically problematic to reconstruct a real data point by allowing the others to ``cancel each other out'' with complex additions and subtractions~\cite{nmfnature}.\ Second, under the affine constraint, the coefficient vector is not flexible enough to handle real-world cases where the data points are often corrupted by noise or outliers.\ Third, the exponentiation, absolute symmetrization, or squaring operations in \textbf{Step 2} force the negative coefficients to be positive, thus damaging the inherent correlations among the data points.

\begin{figure*}[ht!]
\centering
\subfigure{
\begin{minipage}{0.48\textwidth} \includegraphics[width=1\textwidth]{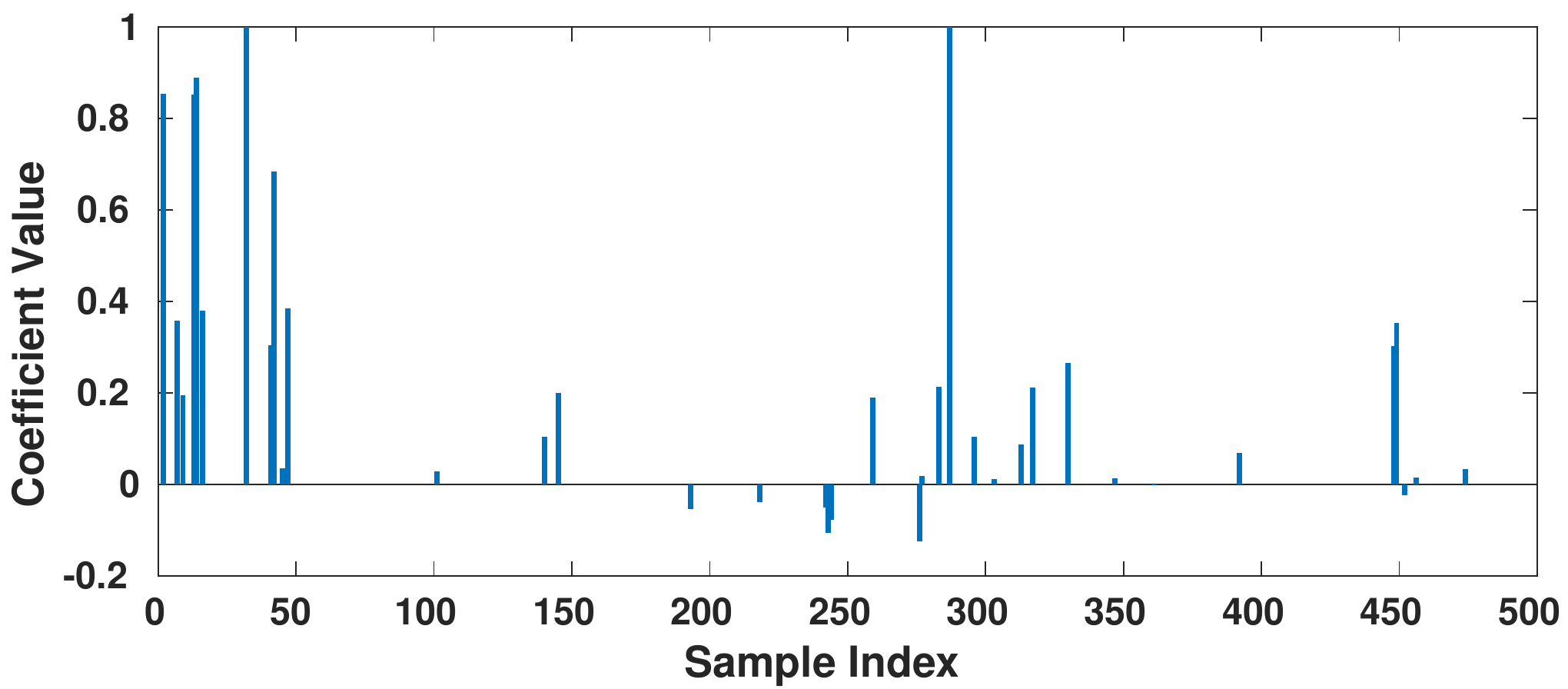}
\centering{(a) Sparse}
\end{minipage}
\begin{minipage}{0.48\textwidth} \includegraphics[width=1\textwidth]{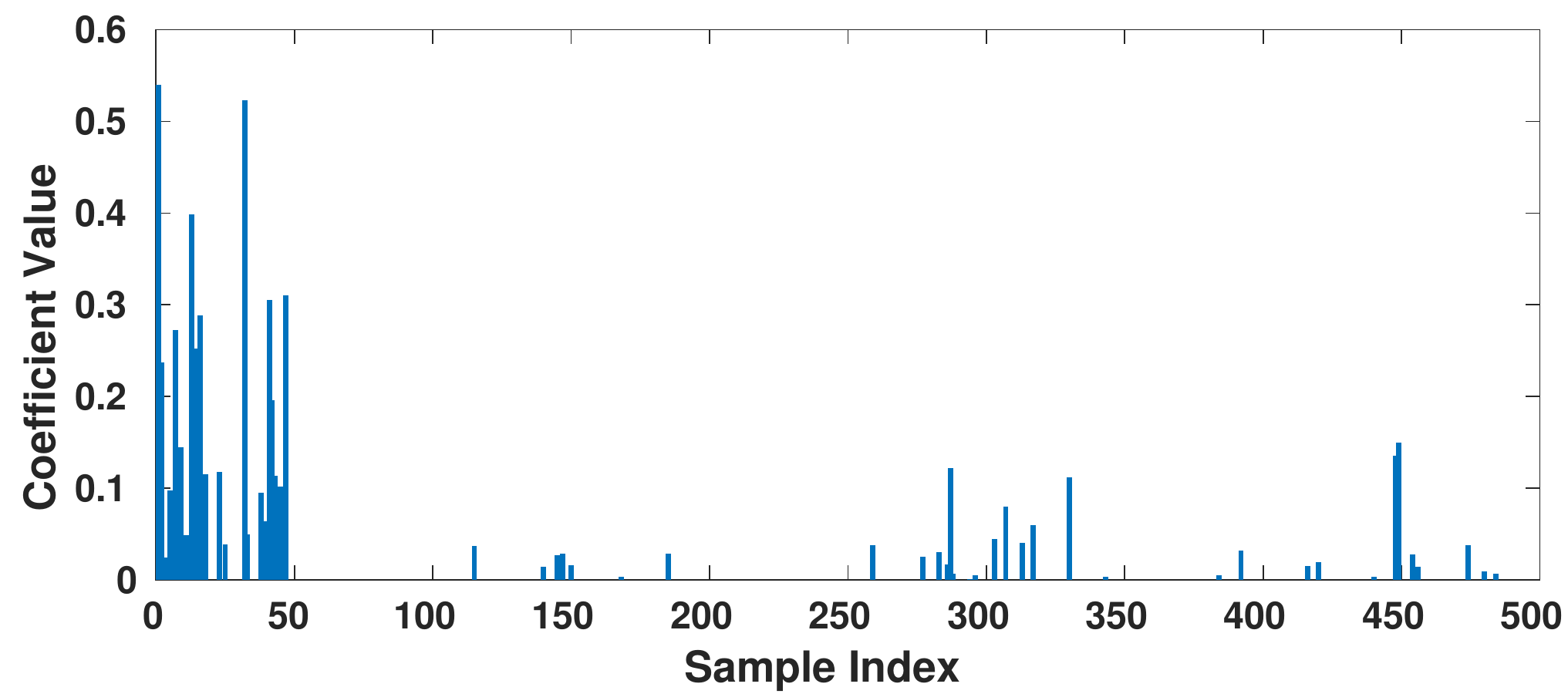}
\centering{(b) Non-negative}
\end{minipage}
}
\subfigure{
\begin{minipage}{0.48\textwidth} \includegraphics[width=1\textwidth]{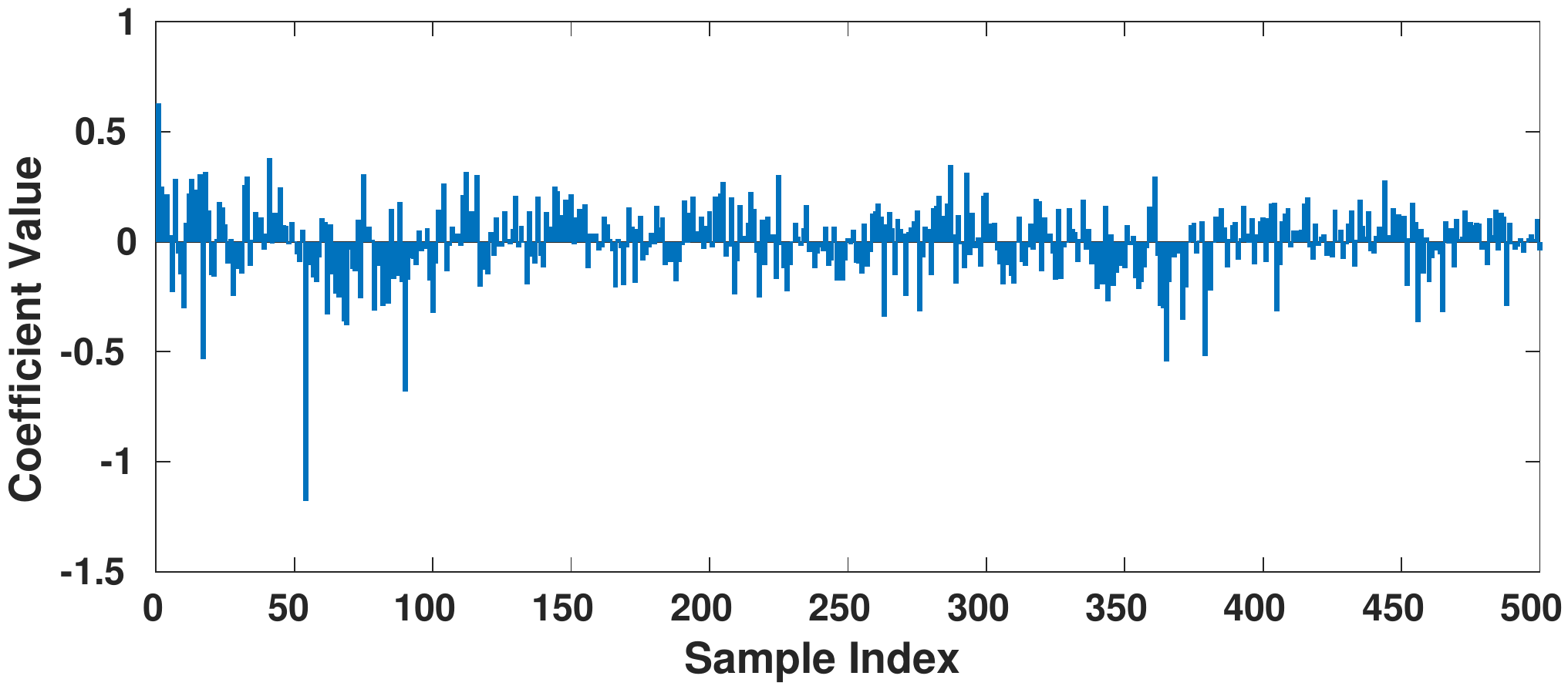}
\centering{(c) Affine}
\end{minipage}
\begin{minipage}{0.48\textwidth}  \includegraphics[width=1\textwidth]{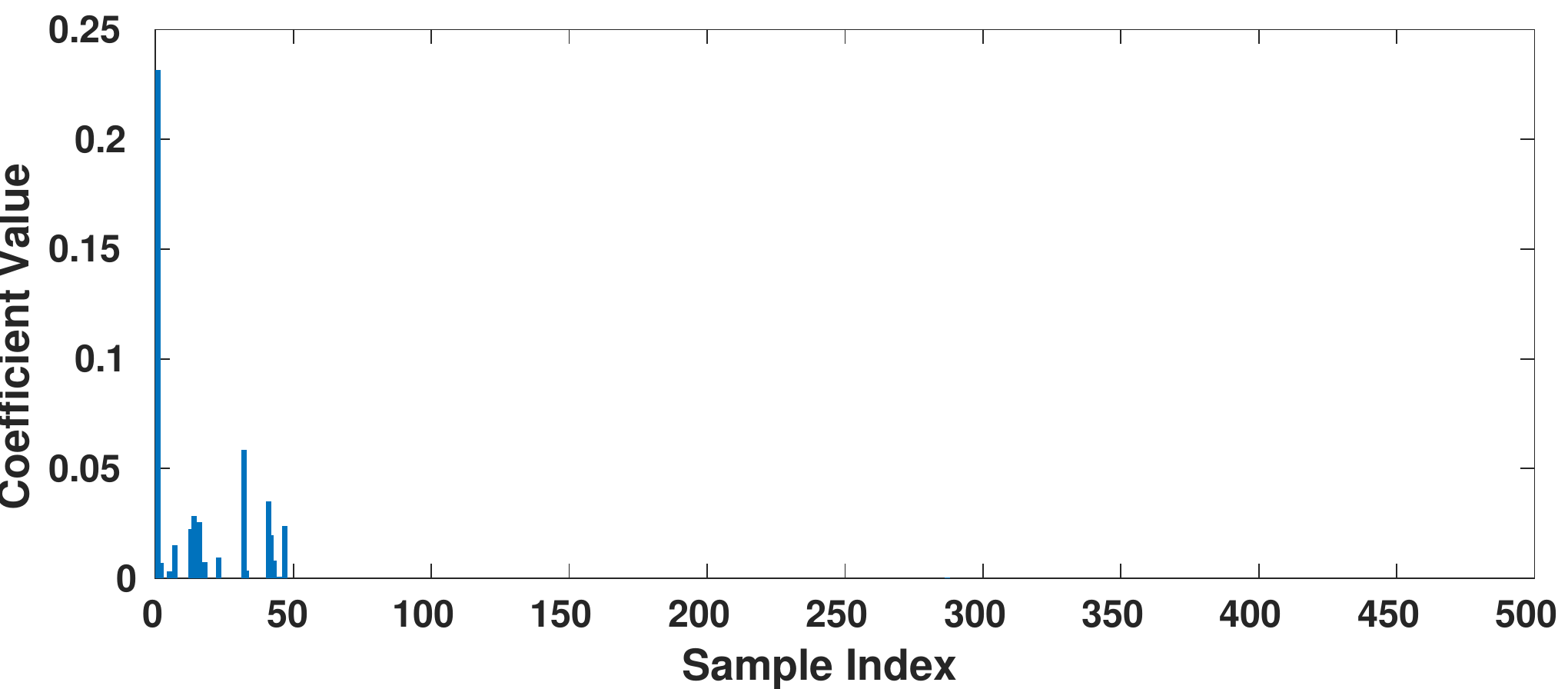}
\centering{(d) Scaled Simplex}
\end{minipage}
}
\vspace{-4mm}
\caption{Comparison of coefficient vectors by different representation schemes on the handwritten digit images from MNIST~\cite{mnist}.\ A digit sample ``0'' is used to compute the vector over 500 samples of digits $\{0,1,...,9\}$ (50 for each).\ (a) The vector solved by a sparse model (e.g., LASSO~\cite{lasso}) are confusing.\ (b) The vector solved by the least square regression (LSR) model with non-negative constraint are noisy.\ (c) The coefficients solved by LSR with affine constraint are chaotic.\ (d) The LSR with the proposed scaled simplex representation (SSR) can obtain physically more meaningful coefficients vector.}
\label{f-example}
\end{figure*}

To solve the three drawbacks mentioned above, we propose a Scaled Simplex Representation (SSR) for self-expressive based subspace clustering.\ Specifically, we first extend the affine constraint to the scaled version in the optimization model, and the coefficient vector will sum up to a scalar $s$ ($0<s<1$) instead of $1$.\ By tuning $s$, we can flexibly alter the generative and discriminative properties of the proposed SSR.\ 
Second, we utilize a non-negative constraint to make the representation more physically meaningful.\ Introducing the non-negative constraint has three primary benefits: 1) in \textbf{Step 1}, it eliminates the physically problematic subtractions among data points in optimization.\ 2) the obtained coefficient matrix in \textbf{Step 1} maintains the inherent correlations among data points by avoiding exponentiation, absolute symmetrization, or squaring operations when constructing the affinity matrix in \textbf{Step 2}.\ 3) the non-negativity could potentially enhance the discriminability of the coefficients~\cite{nnlrs} with the scaled affine constraint, such that a data point is more likely reconstructed by the data points from the same subspace.\ To illustrate the advantages of the proposed scaled simplex representation, in Figure~\ref{f-example} we show the comparison of coefficient vectors solved by different representation schemes on the handwritten digit images from MNIST~\cite{mnist}.\ A digit sample ``0'' is used to compute the vector over 500 samples of digits $\{0,1,...,9\}$ (50 for each).\ We observe that: The vector solved by a sparse model (e.g., LASSO~\cite{lasso}) are confusing, the coefficients over the samples of other digits are also non-zero (Figure~\ref{f-example} (a)).\ The vector solved by the least square regression (LSR) model with non-negative constraint are noisy (Figure~\ref{f-example} (b)).\ The coefficients solved by LSR with affine constraint are densely distributed over all samples (Figure~\ref{f-example} (c)).\ The LSR with the proposed scaled simplex representation (SSR) can obtain physically more meaningful coefficients vector (Figure~\ref{f-example} (d)).

With the introduced SSR, we propose a novel SSR-based Subspace Clustering (SSRSC) model.\ Experimental results on several benchmark datasets demonstrate that the proposed SSRSC achieves better performance than state-of-the-art algorithms.\ In summary, our contribution are three-fold:
\begin{itemize}
\item We introduce a new Scaled Simplex Representation (SSR) for subspace clustering.\ The proposed SSR can not only maintain the inherent correlations among the data points, but also handle practical problems more flexibly.

\item We propose an SSRSC model for subspace clustering.\ We reformulate the proposed SSRSC model into a linear equality-constrained problem with two variables, and solve the problem by an alternating direction method of multipliers (ADMM)~\cite{admm}.\ Each variable can be updated efficiently, and the convergence can be guaranteed. 

\item We performed comprehensive experiments on several benchmark datasets, i.e., Hopkins 155~\cite{benchmark}, ORL, Extended Yale B~\cite{YaleB}, MNIST~\cite{mnist}, and EMNIST.\ The results demonstrate that the proposed SSRSC algorithm is very efficient, and achieves better performance than state-of-the-art SC algorithms on motion segmentation, face clustering, and hand-written digits/letters clustering. 
\end{itemize}

The remainder of this paper is organized as follows. In \S\ref{sec:II}, we briefly survey the related work. In \S\ref{sec:III}, we first present the proposed SSRSC model, then provide its optimization, and finally, present the proposed SSRSC based SC algorithm.\ Extensive experiments are conducted in \S\ref{sec:IV} to evaluate the SSRSC algorithm and compare it with state-of-the-art SC algorithms.\ Conclusion and future work are given in \S\ref{sec:V}.

\section{Related Work}
\label{sec:II}
\subsection{Prior Work on Subspace Clustering}
\label{sec:II-A}
According to the employed mathematical framework, most existing SC algorithms~\cite{Ksubspaces,asc,multiplemotion,gpca,rsim,mppca,msl,ransac,alc,lsa,slbf,llmc,scc,ssccvpr,sscpami,lrricml,lrrpami,yin2015dual,lrsc,Karavasilis,Ellis,lsr,smr,sssc,bd,mgr,s3c,s3ctip,rssc,sscomp,nvr3,you2017provable,ji2017deep,alrg2018,yang2018automatic,You2018ECCV} can be divided into four main categories: iterative methods, algebraic methods, statistical methods, and self-expressive methods.~To make this section as compact as possible, please refer to~\cite{sscpami} for the introduction of iterative methods, algebraic methods, and statistical methods.\ Self-expressive methods are closely related to our work.\ They usually use local information around each data point to compare the similarity of two points. Inspired by compressed sensing theory~\cite{uncertainty,donohocs}, Sparse Subspace Clustering (SSC)~\cite{ssccvpr} solves the clustering problem by seeking a sparse representation of data points over themselves. By resolving the sparse representations for all data points and constructing an affinity graph, SSC automatically finds different subspaces as well as their dimensions from a union of subspaces.\ A robust version of SSC that deals with noise, corruptions, and missing observations is given in~\cite{sscpami}.\
Instead of finding a sparse representation, the Low-Rank Representation (LRR) method~\cite{lrricml,lrrpami,yin2015dual} poses the SC problem as finding a low-rank representation of the data over themselves. Lu \emph{et al.} proposed a clustering method based on Least Squares Regression (LSR)~\cite{lsr} to take advantage of data correlations and group highly correlated data together. The grouping information can be used to construct an affinity matrix that is block diagonal and can be used for SC through spectral clustering algorithms. Recently, Lin \emph{et al.} analyzed the grouping effect in depth and proposed a SMooth Representation (SMR) framework~\cite{smr} which also achieves a state-of-the-art performance for the subspace clustering problem. Different from SSC, the LRR, LSR, and SMR algorithms all use Normalized Cuts~\cite{shi2000normalized} in the spectral clustering step. You \emph{et al.} proposed a scalable Orthogonal Matching Pursuit (OMP) method~\cite{sscomp} to solve the SSC model~\cite{sscpami}. You \emph{et al.} also developed an elastic-net subspace clustering (EnSC) model~\cite{you2016oracle} which correctly predicted relevant connections among different clusters. Ji \emph{et al.}~\cite{ji2017deep} developed the first unsupervised network for subspace clustering by learning the \textsl{self-expressiveness} property~\cite{ssccvpr}.

\subsection{The Self-expressiveness Based Framework}
\label{sec:II-B}
Most state-of-the-art subspace clustering (SC) methods are designed under the self-expressive framework. Mathematically, denoting the data matrix as $\bm{X}=[\bm{x}_{1},...,\bm{x}_{N}]\in\mathbb{R}^{D\times N}$, each data point $\bm{x}_{i}\in\mathbb{R}^{D}$, $i\in\{1,...,N\}$ in $\bm{X}$ can be expressed as 
\vspace{-0mm}
\begin{equation}
\vspace{-0mm}
\label{e1}
\bm{x}_{i}
=
\bm{X}
\bm{c}_{i}
,
\end{equation} 
where $\bm{c}_{i}\in\mathbb{R}^{N}$ is the coefficient vector. If the data points are listed column by column, (\ref{e1}) can be rewritten as 
\begin{equation}
\label{e2}
\bm{X}
=
\bm{X}
\bm{C}
,
\end{equation} 
where $\bm{C}\in\mathbb{R}^{N\times N}$ is the coefficient matrix. To find the desired $\bm{C}$, existing SC methods \cite{ssccvpr,sscpami,lrricml,lrrpami,lrsc,lsr,sssc,bd,smr,mgr,s3c,s3ctip,rssc,sscomp,nvr3} impose various regularizations, such as sparsity and low rankness. Below, $\|\cdot\|_{F}$, $\|\cdot\|_{1}$, $\|\cdot\|_{2}$, $\|\cdot\|_{2,1}$, $\|\cdot\|_{*}$, $\lambda$, and $p$ denote the Frobenius norm, the $\ell_{1}$ norm, the $\ell_{2}$ norm, the $\ell_{2,1}$ norm, the nuclear norm, the regularization parameter, and a positive integer, respectively.\ The optimization models of several representative works are summarized as follows:

Sparse Subspace Clustering (SSC) \cite{sscpami}:
\begin{equation}
\label{e3}
\min_{\bm{C}}
\|\bm{C}\|_{1}
\quad
\text{s.t.}
\quad
\bm{X}
=
\bm{X}
\bm{C}
,
\bm{1}^{\top}\bm{C}=\bm{1}^{\top}
,
\text{diag}(\bm{C})=\bm{0}
.
\end{equation}

Low-Rank Representation (LRR) \cite{lrrpami}:
\begin{equation}
\label{e4}
\min_{\bm{C}}
\|
\bm{X}
-
\bm{X}
\bm{C}
\|
_{2,1}
+
\lambda
\|\bm{C}\|_{*}
.
\end{equation}

Least Squares Regression (LSR) \cite{lsr}:
\begin{equation}
\label{e5}
\min_{\bm{C}}
\|
\bm{X}
-
\bm{X}
\bm{C}
\|
_{F}^{2}
+
\lambda
\|\bm{C}\|_{F}^{2}
\quad
\text{s.t.}
\quad
\text{diag}(\bm{C})=\bm{0}
.
\end{equation}

SSC by Orthogonal Matching Pursuit (SSCOMP) \cite{sscomp}:
\begin{equation}
\label{e6}
\min_{\bm{c}_{i}}
\|
\bm{x}_{i}
-
\bm{X}
\bm{c}_{i}
\|
_{2}^{2}
\quad
\text{s.t.}
\quad
\|\bm{c}_{i}\|_{0}\le p
,
\bm{c}_{ii}=0
.
\end{equation}

Once the coefficient matrix $\bm{C}$ is computed, the affinity matrix $\bm{A}$ is usually constructed by exponentiation \cite{scc}, absolute symmetrization \cite{ssccvpr,sscpami,lrsc,lsr,sssc,smr,bd,mgr,s3c,s3ctip,rssc,sscomp}, and squaring operations \cite{lrricml,lrrpami,nvr3}, etc. For example, the widely used absolute symmetrization operation in \cite{ssccvpr,sscpami,lrsc,lsr,sssc,smr,bd,mgr,s3c,s3ctip,rssc,sscomp} is defined by 
\begin{equation}
\label{e7}
\bm{A}
=
(|\bm{C}|+|\bm{C}^{\top}|)/2.
\end{equation}
After the affinity matrix $\bm{A}$ is obtained, spectral clustering techniques~\cite{shi2000normalized} are applied to obtain the final segmentation of the subspaces.\ However, these self-expressive methods suffer from one major drawback: the exponentiation, absolute symmetrization, and squaring operations will force the negative entries in $\bm{C}$ to be positive in $\bm{A}$, and hence damage the inherent correlations among the data points in $\bm{X}$.\ Besides, the affine constraint in SSC limits its flexibility, making it difficult to deal with complex real world applications.\ In order to remedy these drawbacks, in this paper, we introduce the Scaled Simplex Representation to tackle the SC problem.

\subsection{Other Constraints for Subspace Clustering }
\label{sec:II-C}

Though sparse~\cite{ssccvpr,sscpami} or low-rank~\cite{lrricml,lrrpami,peng2017integrating} representation, and ridge regression~\cite{lsr,peng2015robust} are widely used in subspace clustering and other vision tasks~\cite{pgpd,mcwnnm,guide,Liang_2018_CVPR,twsc}.\ 
there are other constraints also employed by existing subspace clustering algorithms.\ In~\cite{bd}, Feng \textsl{et al.} proposed the Laplacian constraint for block-diagonal matrix pursuit.\ The developed block-diagonal SSC and LRR show clear improvements over SSC and LRR on subspace clustering and graph construction for semi-supervised learning.\ The log-determinant function is utilized in~\cite{peng2015logrank} as an alternative to the nuclear norm for low-rank constraint.\ The non-negative constraint has also been adopted in some recent work for similarity graph learning~\cite{zhuang2015constructing,kangkernel,kanglowrankkernel,kang2019clustering,xunrc2019}.\ In these work, the non-negativity is employed to construct sparse or low-rank similarity graph.\ Our work shares the same spirit with these work on this point, and aims to build physically reasonable affinity matrix by employing non-negativity.\ Recently, Li \textsl{et al.}~\cite{li2018affinity} proposed to directly learn the affinity matrix by diffusion process to spread local manifold structure of data along with its global manifold.\ This work can be viewed as propagating data manifold constraint into sparsity models to enhance its connectivity for subspace clustering.\ The simplex sparse constraint in~\cite{Huang2015ANS} is closely related to our work.\ But our proposed scaled simplex constraint allows the sum of coefficients to be a scalar, while in this work the sum is fixed to be 1.\ Besides, to make our model simpler, we do not use the diagonal constraint in~\cite{Huang2015ANS}.

\section{Simplex Representation based Subspace Clustering}
\label{sec:III}
In this section, we propose a Scaled Simplex Representation (SSR) based subspace clustering (SSRSC) model, develop an optimization algorithm to solve it, and present a novel SSRSC based algorithm for subspace clustering (SC). 

\subsection{Proposed SSRSC Model}
\label{sec:III-A}

Given a data matrix $\bm{X}$, for each data point $\bm{x}_{i}$ in $\bm{X}$, our SSRSC model aims to obtain its coefficient vector $\bm{c}_{i}$ over $\bm{X}$ under the scaled simplex constraint $\{\bm{c}_{i}\ge0,
\bm{1}^{\top}\bm{c}_{i}=s\}$.\ Here we employ the least square regression (LSR) as the objective function due to its simplicity.\ The proposed SSRSC model is formulated as follows: 
\begin{equation}
\label{e8}
\min_{\bm{c}_{i}}
\|
\bm{x}_{i}
-
\bm{X}
\bm{c}_{i}
\|
_{2}^{2}
+
\lambda
\|\bm{c}_{i}\|_{2}^{2}
\ 
\text{s.t.}
\ 
\bm{c}_{i}\ge0,
\bm{1}^{\top}\bm{c}_{i}=s
,
\end{equation}
where $\bm{1}$ is the vector of all ones and $s>0$ is a scalar denoting the sum of entries in the coefficient vector $\bm{c}_{i}$.\ We use the term ``scaled simplex'' here because the entries in the coefficient vector $\bm{c}_{i}$ are constrained by a scaled simplex, i.e., they are non-negative and sum up to a scalar $s$. 

\begin{table}[t]
\centering
\begin{tabular}{l}
\Xhline{1pt}
\textbf{Algorithm 1}: Projection of vector $\bm{u}_{k+1}$ onto a simplex
\\
\hline
\textbf{Input:} Data point $\bm{u}_{k+1}\in\mathbb{R}^{N}$, scalar $s$;
\\
1. Sort $\bm{u}_{k+1}$ into $\bm{w}$: $w_1\ge w_2\ge ...\ge w_N$;
\\
2. Find $\alpha=\max\{1\le j\le N: w_j+\frac{1}{j}(s-\sum_{i=1}^{j}w_i)>0\}$;
\\
3. Define $\beta=\frac{1}{\alpha}(s-\sum_{i=1}^{\alpha}w_i)$;
\\
\textbf{Output:} $\bm{z}_{k+1}$: ${z}_{k+1}^i=\max\{u_{k+1}^i+\beta,0\}$, $i=1,...,N$. 
\\
\Xhline{1pt}
\end{tabular}
\label{a1}
\end{table}

We can also rewrite the SSR-based model (\ref{e8}) for all $N$ data points in the matrix form:
\begin{equation}
\begin{split}
\label{e9}
\min_{\bm{C}}
&
\|
\bm{X}
-
\bm{X}
\bm{C}
\|
_{F}^{2}
+
\lambda
\|
\bm{C}
\|
_{F}^{2}
\\
&
\text{s.t.}
\quad
\bm{C}
\ge
0
,
\bm{1}^{\top}\bm{C}=s\bm{1}^{\top}
,
\end{split}
\end{equation}
where $\bm{C}\in\mathbb{R}^{N\times N}$ is the coefficient matrix. Here, the constraint $\bm{C}\ge0$ favors positive values for entries corresponding to data points from the same subspace, while suppressing entries corresponding to data points from different subspaces, thus making the coefficient matrix $\bm{C}$ discriminative. The constraint $\bm{1}^{\top}\bm{C}=s\bm{1}^{\top}$ limits the sum of each coefficient vector $\bm{c}_i$ to be $s$, thus making the representation more discriminative since each entry should be non-negative.

\subsection{Model Optimization}
\label{sec:III-B}
The proposed SSRSC model~(\ref{e9}) cannot be solved analytically. In this section, we solve it by employing variable splitting methods~\cite{courant1943,Eckstein1992}. Specifically, we introduce an auxiliary variable $\bm{Z}$ into the SSRSC model~(\ref{e9}), and reformulate it as a linear equality-constrained problem:
\begin{equation}
\begin{split}
\label{e10}
\min_{\bm{C},\bm{Z}}
&
\|
\bm{X}
-
\bm{X}\bm{C}
\|_{F}^{2}
+
\lambda
\|
\bm{Z}
\|
_{F}^{2}
\\
\text{s.t.}
\quad
&
\bm{Z}
\ge
0
,
\bm{1}^{\top}\bm{Z}=s\bm{1}^{\top}
,
\bm{Z}
=
\bm{C}
,
\end{split}
\end{equation}
whose solution w.r.t. $\bm{C}$ coincides with the solution of ($\ref{e9}$). Since the objective function in Eqn.~(\ref{e10}) is separable w.r.t. the variables $\bm{C}$ and $\bm{Z}$, it can be solved using the ADMM~\cite{admm}.\ The corresponding augmented Lagrangian function is
\begin{equation}
\begin{split}
\label{e11}
&\mathcal{L}
(\bm{C},
\bm{Z},
\bm{\Delta},
\rho)
\\
=
&
\|
\bm{X}
-
\bm{X}\bm{C}
\|_{F}^{2}
+
\lambda
\|
\bm{Z}
\|
_{F}^{2}
+
\langle
\bm{\Delta},
\bm{Z}
-
\bm{C}
\rangle
+
\frac{\rho}{2}
\|
\bm{Z}
-
\bm{C}
\|_{F}^{2}
\\
=
&
\|
\bm{X}
-
\bm{X}\bm{C}
\|_{F}^{2}
+
\lambda
\|
\bm{Z}
\|
_{F}^{2}
+
\frac{\rho}{2}
\|
\bm{Z}
-
\bm{C}
+
\frac{1}{\rho}
\bm{\Delta}
\|_{F}^{2}
\\
=
&
\|
\bm{X}
-
\bm{X}\bm{C}
\|_{F}^{2}
+
\frac{2\lambda+\rho}{2}
\|
\bm{Z}
-
\frac{\rho}{2\lambda+\rho}
(
\bm{C}
-
\frac{1}{\rho}
\bm{\Delta}
)
\|
_{F}^{2}
\\
&
+
\frac{\lambda\rho}{2\lambda+\rho}
\|
\bm{C}
-
\frac{1}{\rho}
\bm{\Delta}
\|
_{F}^{2}
,
\end{split}
\end{equation}
where $\bm{\Delta}$ is the augmented Lagrangian multiplier and $\rho>0$ is the penalty parameter.\ Denote by ($\bm{C}_{k}, \bm{Z}_{k}$) and $\bm{\Delta}_{k}$ the optimization variables and Lagrange multiplier at iteration $k$ ($k=0,1,2,...$), respectively.\ We initialize the variables $\bm{C}_{0}$, $\bm{Z}_{0}$, and $\bm{\Delta}_{0}$ to be conformable zero matrices.\ By taking derivatives of Lagrangian function $\mathcal{L}$ w.r.t. $\bm{C}$ and $\bm{Z}$, and setting them to be zeros, we can alternatively update the variables as follows:
\\
(1) \textbf{Updating $\bm{C}$ while fixing $\bm{Z}_{k}$ and $\bm{\Delta}_{k}$}:
\begin{equation}
\begin{split}
\label{e12}
\bm{C}_{k+1}
=
\arg\min_{\bm{C}}
\|
\bm{X}
-
\bm{X}\bm{C}
\|
_{F}^{2}
+
\frac{\rho}{2}
\|
\bm{C}
-
(
\bm{Z}_k
+
\frac{1}{\rho}
\bm{\Delta}_k
)
\|
_{F}^{2}
.
\end{split}
\end{equation}
This is a standard LSR problem with closed-from solution:
\begin{equation}
\begin{split}
\label{e13}
&
\hspace{-3mm}
\bm{C}_{k+1}
=
(\bm{X}^{\top}\bm{X}+\frac{\rho}{2}\bm{I})^{-1}
(\bm{X}^{\top}\bm{X}+\frac{\rho}{2}\bm{Z}_{k}+\frac{1}{2}\bm{\Delta}_{k})
.
\end{split}
\end{equation}
We note that the complexity for updating $\bm{C}$ is $\mathcal{O}(N^3)$ when there are $N$ data points in the data matrix $\bm{X}$.\ This cube complexity largely hinders the practical usage of the proposed method.\ In order to improve the speed (while maintaining the accuracy) of SSRSC, we employ the Woodbury formula~\cite{Woodbury} to compute the inversion in Eqn.~(\ref{e13}) as
\vspace{-0mm}
\begin{equation}
\vspace{-0mm}
\begin{split}
\label{e-woodbury}
\hspace{-2.5mm}
(\bm{X}^{\top}\bm{X}+\frac{\rho}{2}\bm{I})^{-1}
=
\frac{2}{\rho}\bm{I}
-
(\frac{2}{\rho})^{2}
\bm{X}^{\top}
(
\bm{I}
+
\frac{2}{\rho}\bm{X}\bm{X}^{\top}
)^{-1}
\bm{X}
.
\end{split}
\end{equation}
By this step, the complexity of updating $\bm{C}$ is reduced from $\mathcal{O}(N^3)$ to $\mathcal{O}(DN^{2})$.\ Since $(\bm{X}^{\top}\bm{X}+\frac{\rho}{2}\bm{I})^{-1}$ is not updated during iterations, we can also pre-compute it and store it before iterations.\ This further saves abundant computational costs.

\noindent
(2) \textbf{Updating $\bm{Z}$ while fixing $\bm{C}_{k}$ and $\bm{\Delta}_{k}$}:
\begin{equation}
\begin{split}
\label{e14}
\bm{Z}_{k+1}
&
=
\arg\min_{\bm{Z}}
\|
\bm{Z}
-
\frac{\rho}{2\lambda+\rho}
(
\bm{C}_{k+1}-\rho^{-1}\bm{\Delta}_{k}
)
\|_{F}^{2}
\\
&
\quad
\text{s.t.}
\quad 
\bm{Z}\ge0
,
\bm{1}^{\top}\bm{Z}=s\bm{1}^{\top}
.
\end{split}
\end{equation}
This is a quadratic programming problem with strictly convex objective function and a close and convex constraint, so there is a unique solution.\ As such, problem (\ref{e14}) can be solved using, for example, active set methods \cite{Nocedal2006,ds3} or projection based methods~\cite{Michelot1986,duchi2008efficient,Condat2016}.\ Here, we employ the projection based method~\cite{duchi2008efficient}, whose computational complexity is $\mathcal{O}(N\log{N})$ to project a vector of length $N$ onto a simplex.\ Denoting by $\bm{u}_{k+1}$ an arbitrary column of $\frac{\rho}{2\lambda+\rho}(\bm{C}_{k+1}-\rho^{-1}\bm{\Delta}_k)$, the solution of $\bm{z}_{k+1}$ (the corresponding column in $\bm{Z}_{k+1}$) can be solved by projecting $\bm{u}_{k+1}$ onto a simplex~\cite{duchi2008efficient}.\ The solution of problem (\ref{e14}) is summarized in Algorithm 1.

\noindent
(3) \textbf{Updating $\bm{\Delta}$ while fixing $\bm{C}_{k}$ and $\bm{Z}_{k}$}:
\begin{equation}
\begin{split}
\label{e15}
\bm{\Delta}_{k+1}
&
=
\bm{\Delta}_{k}
+
\rho
(\bm{Z}_{k+1}-\bm{C}_{k+1})
.
\end{split}
\end{equation}

\begin{table}[t!]
\centering
\begin{tabular}{l}
\Xhline{1pt}
\textbf{Algorithm 2}: Solve the SSRSC model (\ref{e10}) via ADMM
\\
\Xhline{0.5pt}
\textbf{Input:} Data matrix $\bm{X}$, $\text{Tol}>0$, $\rho>0$, $K$;
\\
\textbf{Initialization:} $\bm{C}_{0}=\bm{Z}_{0}=\bm{\Delta}_{0}=\bm{0}$, \text{T} = \text{False}, $k=0$;
\\
\textbf{While} (\text{T} == \text{False}) \textbf{do}
\\
1. Update $\bm{C}_{k+1}$ by Eqn.\ (\ref{e13});
\\
2. Update $\bm{Z}_{k+1}$ by Eqn.\ (\ref{e14});
\\
3. Update $\bm{\Delta}_{k+1}$ by Eqn.\ (\ref{e15});
\\
4. \textbf{if} (Convergence condition is satisfied) or ($k\ge K$)
\\
5.\quad \text{T} $\leftarrow$ \text{True};
\\
\quad \textbf{end if}
\\
\textbf{end while}
\\
\textbf{Output:} Matrices $\bm{C}$ and $\bm{Z}$.
\\
\Xhline{1pt}
\end{tabular}
\label{a2}
\end{table}

We repeat the above alternative updates until a certain convergence condition is satisfied or the number of iterations reaches a preset threshold $K$. Under the convergence condition of the ADMM algorithm, $\|\bm{C}_{k+1}-\bm{Z}_{k+1}\|_{F}\le \text{Tol}$, $\|\bm{C}_{k+1}-\bm{C}_{k}\|_{F}\le \text{Tol}$, and $\|\bm{Z}_{k+1}-\bm{Z}_{k}\|_{F}\le \text{Tol}$ must be simultaneously satisfied, where \text{Tol}$=0.01$ tolerates small errors.\ Since the objective function and constraints are both convex, the problem (\ref{e10}) solved by ADMM is guaranteed to converge at a global optimal solution.\ We summarize these updating procedures in Algorithm 2.

\textbf{Convergence Analysis}.\ The convergence of Algorithm 2 can be guaranteed since the overall objective function (\ref{e10}) is convex with a global optimal solution.\ In Figure~\ref{f-convergence}, we plot the convergence curves of the errors of $\|\mathbf{C}_{k+1}-\mathbf{Z}_{k+1}\|_{F}$, $\|\mathbf{Z}_{k+1}-\mathbf{Z}_{k}\|_{F}$, $\|\mathbf{C}_{k+1}-\mathbf{C}_{k}\|_{F}$.\ One can see that they are reduced to less than $\text{Tol}=0.01$ simultaneously in 5 iterations.
\begin{figure}[htp]
\centering    
\vspace{-3mm}
\begin{minipage}{0.48\textwidth}
\includegraphics[width=1\textwidth]{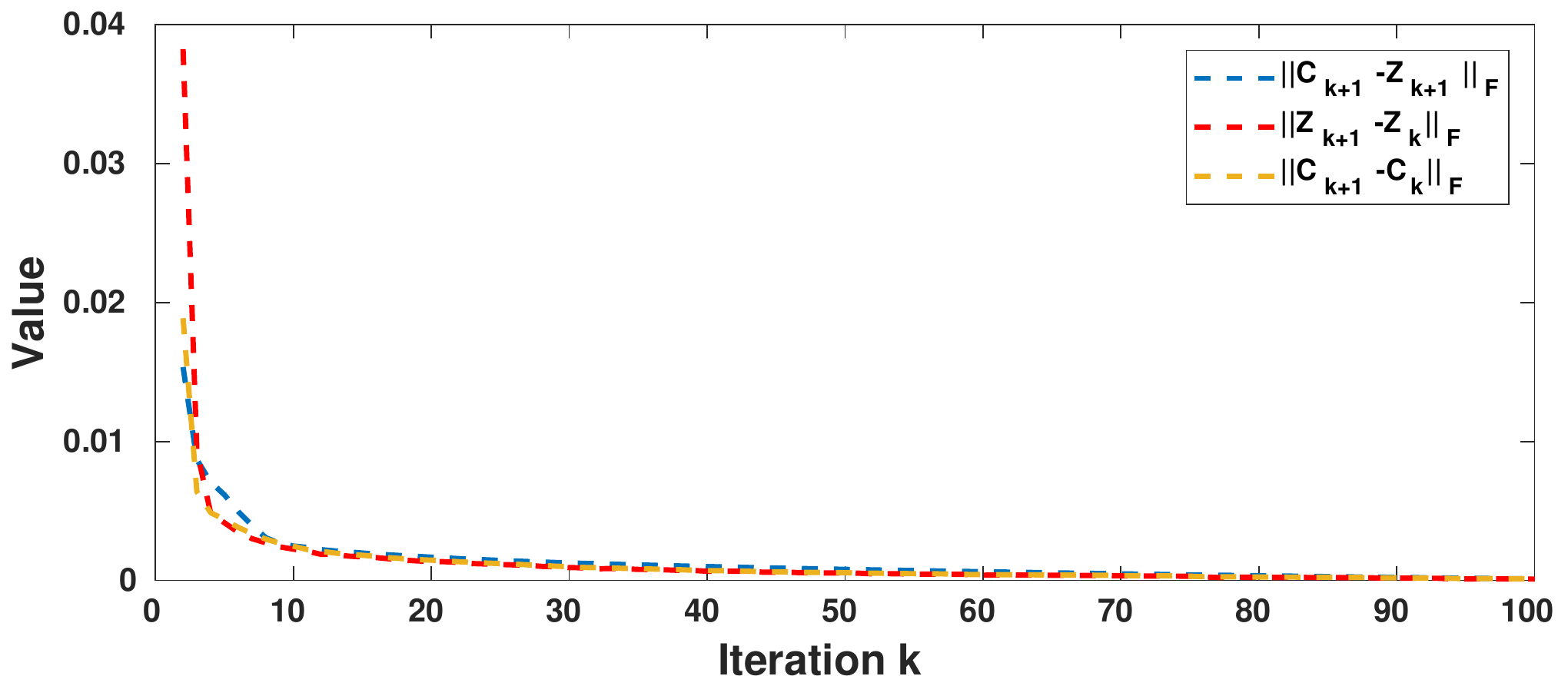}
\end{minipage}
\vspace{-2mm}
\caption{The convergence curves of $\|\mathbf{C}_{k+1}-\mathbf{Z}_{k+1}\|_{F}$ (\textcolor{red}{red} line), $\|\mathbf{Z}_{k+1}-\mathbf{Z}_{k}\|_{F}$ (\textcolor{blue}{blue} line), and $\|\mathbf{C}_{k+1}-\mathbf{C}_{k}\|_{F}$ (\textcolor{green}{green} line) of the proposed SSRSC on the ``1R2RC'' sequence from the Hopkins155 dataset~\cite{benchmark}.}
\label{f-convergence}
\vspace{-3mm}
\end{figure}


\subsection{Theoretical Analysis}
\label{sec:III-C}
Our optimization problem (\ref{e9}) is a convex optimization problem, which means $f(\bm{C}) := \|\bm{X}-\bm{X}\bm{C}\|_{F}^{2} + \lambda \|\bm{C}\|_{F}^{2}$ is a convex function and the set $\mathcal{S} := \{\bm{C}|\bm{C} \ge 0, \bm{1}^{\top}\bm{C}=s\bm{1}^{\top}\}$ is a convex set. Then, the function $f$ has a minimum on the hyper-plane $\overline{\mathcal{S}} := \{\bm{C}|\bm{1}^{\top}\bm{C}=s\bm{1}^{\top}\}$, which is the linear span of $\mathcal{S}$. Below, we discuss the solution space of $f$ over $\mathcal{S}$.

\begin{theorem}
Suppose $\bm{C}^*$ is the minimum of the convex function $f$ over the convex set $\mathcal{S}$. If $\bm{C}^*$ is not the minimum of $f$ on $\overline{\mathcal{S}}$, then $\bm{C}^*$ is on the boundary of $\mathcal{S}$: $\partial \mathcal{S}$.
\end{theorem}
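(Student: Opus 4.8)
The plan is to argue by contrapositive: assume $\bm{C}^*$ lies in the relative interior of $\mathcal{S}$ (i.e.\ $\bm{C}^* \notin \partial\mathcal{S}$), and show that $\bm{C}^*$ must then be the unconstrained-on-$\overline{\mathcal{S}}$ minimizer of $f$. The key observation is that $\overline{\mathcal{S}}$, the affine hyperplane $\{\bm{C}\mid \bm{1}^{\top}\bm{C}=s\bm{1}^{\top}\}$, is exactly the affine hull of $\mathcal{S}$, and the non-negativity constraints $\bm{C}\ge 0$ are the only inequalities cutting $\mathcal{S}$ out of $\overline{\mathcal{S}}$. So being in the relative interior of $\mathcal{S}$ means every entry of $\bm{C}^*$ is strictly positive.

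First I would set up the first-order optimality (KKT) conditions for the convex program $\min_{\bm{C}\in\mathcal{S}} f(\bm{C})$. Since $f$ is convex and differentiable and $\mathcal{S}$ is a polyhedron, $\bm{C}^*$ is optimal over $\mathcal{S}$ iff $\langle \nabla f(\bm{C}^*), \bm{D}-\bm{C}^*\rangle \ge 0$ for all $\bm{D}\in\mathcal{S}$. Equivalently, writing the Lagrangian with multipliers $\bm{\Gamma}\ge 0$ for $\bm{C}\ge 0$ and $\bm{\nu}$ for the equality constraints, stationarity gives $\nabla f(\bm{C}^*) = \bm{1}\bm{\nu}^{\top} + \bm{\Gamma}$ together with complementary slackness $\bm{\Gamma}\odot\bm{C}^* = \bm{0}$. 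Then I would invoke the hypothesis $\bm{C}^*\notin\partial\mathcal{S}$: this forces $\bm{C}^* > 0$ entrywise, hence by complementary slackness $\bm{\Gamma}=\bm{0}$, so $\nabla f(\bm{C}^*) = \bm{1}\bm{\nu}^{\top}$ for some $\bm{\nu}\in\mathbb{R}^N$.

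Next I would show that $\nabla f(\bm{C}^*)\in\mathrm{span}\{\bm{1}\bm{e}_j^{\top}\}$ is precisely the condition for $\bm{C}^*$ to minimize $f$ over the hyperplane $\overline{\mathcal{S}}$. Indeed, the tangent space of $\overline{\mathcal{S}}$ is $T=\{\bm{H}\mid \bm{1}^{\top}\bm{H}=\bm{0}\}$, and $\bm{C}^*$ minimizes the convex $f$ on the affine set $\overline{\mathcal{S}}$ iff $\nabla f(\bm{C}^*)\perp T$, i.e.\ $\langle \bm{1}\bm{\nu}^{\top}, \bm{H}\rangle = \sum_j \nu_j (\bm{1}^{\top}\bm{H})_j = 0$ for every $\bm{H}$ with $\bm{1}^{\top}\bm{H}=\bm{0}$ — which holds automatically. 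Thus $\bm{C}^*$ is a stationary point of $f$ restricted to $\overline{\mathcal{S}}$, and by convexity it is the (a) global minimum of $f$ on $\overline{\mathcal{S}}$, contradicting the hypothesis that $\bm{C}^*$ is \emph{not} the minimum of $f$ on $\overline{\mathcal{S}}$. This contradiction proves that if $\bm{C}^*$ is not the minimum on $\overline{\mathcal{S}}$ then $\bm{C}^*\in\partial\mathcal{S}$.

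The main obstacle I anticipate is being careful about the notions of ``interior'' and ``boundary'': $\mathcal{S}$ has empty interior as a subset of $\mathbb{R}^{N\times N}$ because it lives inside the lower-dimensional hyperplane $\overline{\mathcal{S}}$, so $\partial\mathcal{S}$ must be read as the relative boundary within $\overline{\mathcal{S}}$, and ``$\bm{C}^*$ not on $\partial\mathcal{S}$'' must mean $\bm{C}^*$ is in the relative interior — equivalently all entries strictly positive (one should also note $\mathcal{S}$ is nonempty and its relative interior is nonempty since $s>0$). A second minor point is justifying the clean KKT form: since all constraints are affine, no constraint qualification beyond feasibility is needed, so stationarity plus complementary slackness are exactly necessary and sufficient for optimality of the convex problem. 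Once those two points are pinned down, the argument is the short chain above; I would also remark that strict convexity of $f$ is not needed for this particular statement (it would only be needed to claim uniqueness).
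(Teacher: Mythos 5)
Your proof is correct, but it follows a genuinely different route from the paper's. The paper argues directly and elementarily: assuming $\bm{C}^*\notin\partial\mathcal{S}$, it takes a small ball around $\bm{C}^*$ on which $f(\bm{C})\ge f(\bm{C}^*)$, connects $\bm{C}^*$ to an arbitrary $\bm{D}\in\overline{\mathcal{S}}$ by a segment whose initial portion stays in that ball, and combines the local inequality with Jensen's inequality $f((1-\lambda)\bm{C}^*+\lambda\bm{D})\le(1-\lambda)f(\bm{C}^*)+\lambda f(\bm{D})$ to conclude $f(\bm{D})\ge f(\bm{C}^*)$ — i.e.\ a relative-interior local minimizer of a convex function is global on the affine hull; no differentiability, no multipliers, and no polyhedral structure of $\mathcal{S}$ are used. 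You instead exploit the explicit polyhedral description of $\mathcal{S}$: the KKT conditions (valid without Slater since all constraints are affine) plus complementary slackness at a strictly positive $\bm{C}^*$ kill the multipliers of the nonnegativity constraints, leaving $\nabla f(\bm{C}^*)=\bm{1}\bm{\nu}^{\top}$, which is orthogonal to the direction space $\{\bm{H}\mid\bm{1}^{\top}\bm{H}=\bm{0}\}$ of $\overline{\mathcal{S}}$ and hence, by convexity, certifies global optimality on $\overline{\mathcal{S}}$. Your argument needs differentiability of $f$ (harmless here, since $f$ is quadratic) and the specific constraint structure, but it buys a sharper by-product — the explicit form of the gradient at an interior optimum — whereas the paper's segment argument is shorter, more general (any convex $f$, any convex $\mathcal{S}$ with affine hull $\overline{\mathcal{S}}$), and avoids optimality conditions altogether; your explicit handling of the relative interior/boundary distinction is actually more careful than the paper's, which writes the ball in the ambient space $\mathbb{R}^{N\times N}$ without noting that $\mathcal{S}$ has empty ambient interior.
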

\begin{proof}
We first make an assumption that $\bm{C}^* \notin \partial \mathcal{S}$, then we will derive a contradiction. If our assumption holds, there exists a high-dimensional open sphere $B_r(\bm{C}^*) = \{\bm{C} | \|\bm{C} - \bm{C}^*\| < r\}  \subset \mathbb{R}^{N \times N}$ centered at $\bm{C}^* \cap \mathcal{S}$ with radius $r > 0$, such that for all $\bm{C} \in B_r$, $f(\bm{C}) \geq f(\bm{C}^*)$ holds. 

We consider $\forall \bm{D} \in \overline{\mathcal{S}}$, $\exists \lambda \in (0,1)$ such that $\bm{C}^* + \lambda (\bm{D} - \bm{C}^*) \in B_r(\bm{C}^*)$. In fact, we can set $\lambda < \min(\frac{r}{\|\bm{D} - \bm{C}^*\|}, 1)$. Then we have  $\| \bm{C}^* + \lambda (\bm{D} - \bm{C}^*) - \bm{C}^*\| = \| \lambda (\bm{D} - \bm{C}^*) \| < r$, which means $\bm{C}^* + \lambda (\bm{D} - \bm{C}^*) \in B_r(\bm{C}^*)$. In this way, we have
\begin{equation}
\label{localmin}
f(\bm{C}^* + \lambda (\bm{D} - \bm{C}^*)) \geq f(\bm{C}^*).
\end{equation}
However, due to the convexity of $f$, we have the following Jensen's inequality:
\begin{equation}
\begin{split}
 f(\bm{C}^* + \lambda (\bm{D} - \bm{C}^*)) 
 &
 = f((1-\lambda) \bm{C}^* + \lambda \bm{D})
 \\
 &
  \leq  (1-\lambda)f(\bm{C}^*) + \lambda f(\bm{D}).
\end{split}
\end{equation}
Combining this with Eqn. (\ref{localmin}), we obtain $\lambda f(\bm{D}) \geq \lambda f(\bm{C}^*)$ and hence $f(\bm{D}) \geq f(\bm{C}^*)$, $\forall \bm{D} \in \mathbb{R}^{N \times N}$. Then, $\bm{C}^*$ is the minimum of $f$ on hyper-plane $\overline{\mathcal{S}}$, resulting a contradiction.
\end{proof}

\subsection{Discussion}
\label{sec:III-D}
The constraint ``$\bm{1}^{\top}\bm{c}=1$'' has already been used in Sparse Subspace Clustering (SSC) \cite{ssccvpr,sscpami} to deal with the presence of affine, rather than linear, subspaces. However, limiting the sum of the coefficient vector $\bm{c}$ to be $1$ is not flexible enough for real-world clustering problems. What's more, suppressing the sum of the entries in the coefficient vector $\bm{c}$ can make it more discriminative, since these entries should be non-negative and sum up to a scalar $s$. Considering the extreme case where $s$ is nearly zero, each data point must be represented by its most similar data points in the homogeneous subspace. To this end, in our proposed SSRSC model, we extend the affine constraint of ``summing up to 1'' to a scaled affine constraint of ``summing up to a scalar $s$''.\ In our experiments (please refer to \S\ref{sec:IV}), we observe improved performance of SSRSC on subspace clustering by this extension.

In real-world applications, data are often corrupted by outliers due to ad-hoc data collection techniques. Existing SC methods deal with outliers by explicitly modeling them as an additional variable, and updating this variable using the ADMM algorithm.\ For example, in the seminal work of SSC~\cite{ssccvpr,sscpami} and its successors~\cite{lsr,sssc,rssc,s3c}, $c_{ii}$ is set as $0$ for $\bm{x}_i$, indicating that each data point cannot be represented by itself, thus avoiding trivial solution of identity matrix.\ However, this brings additional computational costs and prevent the whole algorithm from converging~\cite{sscpami,lrrpami}.\ Different from these existing methods~\cite{sscpami,lsr,sssc,rssc,s3c}, we do not consider the constraint of $c_{ii}=0$, for three major reasons.\ First, the positive $\lambda$ in the regularization term can naturally prevent trivial solution of identity matrix $\bm{C}=\bm{I}$.\ Second, $c_{ii}\neq0$ has a clear physical meaning, allowing a sample $\bm{x}_{i}$ in the subspace to be \textsl{partially} represented by itself.\ This is particularly useful when $\bm{x}_i$ is corrupted by noise.\ By removing the constraint of $c_{ii}=0$, our proposed SSRSC model is more robust to noise, as will be demonstrated in the ablation study in \S\ref{sec:IV-D}.

\subsection{Subspace Clustering via Simplex Representation}
\label{sec:III-E}

\noindent
\textbf{Subspace Clustering Algorithm}.\
Denote by $\mathcal{X}=\{\bm{x}_{i}\in\mathbb{R}^{D}\}_{i=1}^{N}$ a set of data points drawn from a union of $n$ subspaces $\{\mathcal{S}_{j}\}_{j=1}^{n}$.\ Most existing spectral clustering based SC algorithms \cite{sscpami,lrrpami,lrsc,smr,s3c,sscomp} first compute the coefficient matrix $\bm{C}$, and then construct a non-negative affinity matrix $\bm{A}$ from $\bm{C}$ by exponentiation \cite{scc}, absolute symmetrization \cite{ssccvpr,sscpami,lrsc,lsr,sssc,smr,bd,mgr,s3c,rssc,sscomp}, or squaring operations \cite{lrricml,lrrpami,nvr3}, etc. In contrast, in our proposed SSRSC model, the coefficient matrix is guaranteed to be non-negative by the introduction of a simplex constraint. Hence, we can remove the absolute operation and construct the affinity matrix by
\begin{equation}
\begin{split}
\label{e18}
\bm{A}
=
(\bm{C}+\bm{C}^{\top})/2
.
\end{split}
\end{equation}

As a common post-processing step in \cite{ssccvpr,sscpami,lrsc,lsr,sssc,smr,bd,mgr,s3c,rssc,sscomp,lrricml,lrrpami,nvr3}, we apply the spectral clustering technique~\cite{ng2001spectral} to the affinity matrix $\bm{A}$, and obtain the final segmentation of data points. Specifically, we employ the widely used Normalized Cut algorithm~\cite{shi2000normalized} to segment the affinity matrix. The proposed SSRSC based subspace clustering algorithm is summarized in Algorithm 3.
\begin{table}[t!]
\centering
\begin{tabular}{l}
\Xhline{1pt}
\textbf{Algorithm 3}: Subspace Clustering by SSRSC
\\
\Xhline{0.5pt}
\textbf{Input:} A set of data points $\mathcal{X}=\{\bm{x}_{1},...,\bm{x}_{N}\}$ lying in
\\
\quad \quad \quad a union of $n$ subspaces $\{\mathcal{S}_{j}\}_{j=1}^{n}$;
\\
1. Obtain the coefficient matrix $\bm{C}$ by solving the SSRSC model:
\vspace{1mm}
\\
\qquad
$
\min_{\bm{C}}
\|
\bm{X}
-
\bm{X}
\bm{C}
\|
_{F}^{2}
+
\lambda
\|
\bm{C}
\|
_{F}^{2}
\ 
\text{s.t.}
\ 
\bm{C}
\ge
0
,
\bm{1}^{\top}\bm{C}=s\bm{1}^{\top}
$
;
\vspace{1mm}
\\
2. Construct the affinity matrix by
\vspace{1mm}
\\
\qquad\qquad
\qquad\qquad
\qquad\qquad
$
\bm{A}=\frac{\bm{C}+\bm{C}^{\top}}{2}
$
;
\vspace{1mm}
\\
3. Apply spectral clustering \cite{ng2001spectral} to the affinity matrix;
\\
\textbf{Output:} Segmentation of data: $\bm{X}_{1},...,\bm{X}_{n}$.
\\
\Xhline{1pt}
\end{tabular}
\vspace{-0mm}
\label{a3}
\end{table}

\noindent
\textbf{Complexity Analysis}.\
Assume that there are $N$ data points in the data matrix $\bm{X}$. In Algorithm 2, the costs for updating $\bm{C}$ and $\bm{Z}$ are $\mathcal{O}(N^2D)$ and $\mathcal{O}(N^2\log{N})$, respectively. The costs for updating $\bm{\Delta}$ and $\rho$ are negligible compared to the updating costs of $\bm{C}$ and $\bm{Z}$. As such, the overall complexity of Algorithm 2 is $\mathcal{O}(\text{max}(D,\log{N})N^2K)$, where $K$ is the number of iterations.\ The costs for affinity matrix construction and spectral clustering in Algorithm 3 can be ignored.\ Hence, the overall cost of the proposed SSRSC is $\mathcal{O}(\text{max}(D,\log{N})N^2K)$ for data matrix $\bm{X}\in\mathbb{R}^{D\times N}$.

\section{Experiments}
\label{sec:IV}
In this section, we first compare the proposed SSRSC with state-of-the-art subspace clustering (SC) methods.\ The comparison are performed on five benchmark datasets on motion segmentation for video analysis, human faces clustering, and hand-written digits/letters clustering.\ Then, we validate the effectiveness of the proposed scaled simplex constraints for SC through comprehensive ablation studies.

\subsection{Implementation Details} 
\label{sec:IV-A}
The proposed SSRSC model~(\ref{e9}) is solved under the ADMM \cite{admm} framework. There are five parameters to be determined in the ADMM algorithm: the regularization parameter $\lambda$, the sum $s$ of the entries in the coefficient vector, the penalty parameter $\rho$, and the iteration number $K$. In all the experiments, we fix $s=0.5$, $\rho=0.5$, and $K=5$.\ As in most competing methods~\cite{scc,ssccvpr,sscpami,lrricml,lrrpami,lrsc,lsr,smr,rsim,sssc,bd,mgr,s3c,s3ctip,rssc,sscomp,nvr3}, the parameter $\lambda$ is tuned on each dataset to achieve the best performance of SSRSC on that dataset. The influence of $\lambda$ on each dataset will be introduced in \S\ref{sec:IV-C}. All experiments are run under the Matlab2014b environment on a machine with a CPU of 3.50GHz and a 12GB RAM.

\subsection{Datasets} 
\label{sec:IV-B}
We evaluate the proposed SSRSC method on the Hopkins-155 dataset~\cite{benchmark} for motion segmentation, the Extended Yale B~\cite{YaleB} and ORL~\cite{orl} datasets for human face clustering, and the MNIST~\cite{mnist} and EMNIST~\cite{emnist} for hand-written digits/letters clustering.

\textbf{Hopkins-155} dataset~\cite{benchmark} contains 155 video sequences, 120 of which contain two moving objects and 35 of which contain three moving objects, corresponding to 2 or 3 low-dimensional subspaces of the ambient space.\ On average, each two-motion sequence has 30 frames and each frame contains $N=266$ data points, while each three-motion sequence has 29 frames and each frame contains $N=393$ data points.\ Similar to the experimental settings in previous methods, such as SSC~\cite{sscpami}, on this dataset we employ principal component analysis (PCA)~\cite{pca} to project the original trajectories of different objects into a 12-dimensional subspace, in which we evaluate the comparison methods.\ This dataset~\cite{benchmark} has been widely used as a benchmark to evaluate SC methods for motion segmentation.\ Figure~\ref{f-hopkins155} presents some segmentation examples from the Hopkins-155 dataset~\cite{benchmark}, where different colors indicate different moving objects. 
\begin{figure}[t!]
\centering
\subfigure{
\begin{minipage}[t]{0.22\textwidth}
\centering
\raisebox{-0.15cm}{\includegraphics[width=1\textwidth]{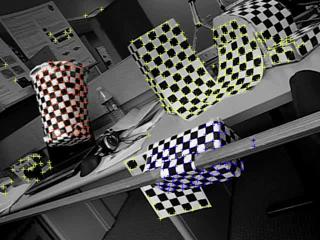}}
\end{minipage}
\begin{minipage}[t]{0.22\textwidth}
\centering
\raisebox{-0.15cm}{\includegraphics[width=1\textwidth]{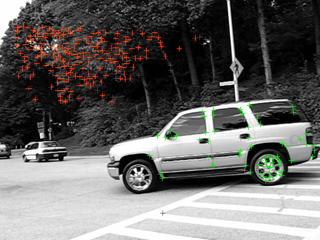}}
\end{minipage}
}
\subfigure{
\vspace{-2mm}
\begin{minipage}[t]{0.22\textwidth}
\centering
\raisebox{-0.15cm}{\includegraphics[width=1\textwidth]{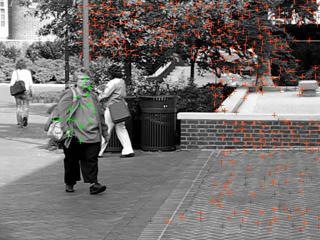}}
\end{minipage}
\begin{minipage}[t]{0.22\textwidth}
\centering
\raisebox{-0.15cm}{\includegraphics[width=1\textwidth]{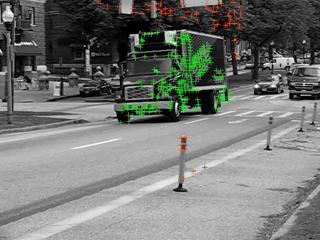}}
\end{minipage}
}
\vspace{-3mm}
\caption{Exemplar motion segmentation of one frame from different sequences in the Hopkins-155 dataset~\cite{benchmark}.}
\vspace{-3mm}
\label{f-hopkins155}
\end{figure}

\textbf{Extended Yale B} dataset~\cite{YaleB} contains face images of 38 human subjects, and each subject has 64 near-frontal images (gray-scale) taken under different illumination conditions.\ The original images are of size $192\times168$ pixels and we resize them to $48\times42$ pixels in our experiments.\ For dimension reduction purposes, the resized images are further projected onto a $6n$-dimensional subspace using PCA, where $n$ is the number of subjects (or subspaces) selected in our experiments.\ Following the experimental settings in SSC~\cite{sscpami}, we divide the 38 subjects into 4 groups, consisting of subjects 1 to 10, subjects 11 to 20, subjects 21 to 30, and subjects 31 to 38.\ For each of the first three groups we select $n\in\{2, 3, 5, 8, 10\}$ subjects, while for the last group we choose $n\in\{2, 3, 5, 8\}$.\ Finally, we apply SC algorithms for each set of $n$ subjects.\ Figure~\ref{f-yaleorl} (top) shows some face images from Extended Yale B, captured under different lighting conditions.\ 
\begin{figure}[t]
\centering
\subfigure{
\begin{minipage}[t]{0.1\textwidth}
\centering
\raisebox{-0.15cm}{\includegraphics[width=1\textwidth]{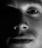}}
\end{minipage}
\begin{minipage}[t]{0.1\textwidth}
\centering
\raisebox{-0.15cm}{\includegraphics[width=1\textwidth]{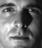}}
\end{minipage}
\begin{minipage}[t]{0.1\textwidth}
\centering
\raisebox{-0.15cm}{\includegraphics[width=1\textwidth]{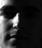}}
\end{minipage}
\begin{minipage}[t]{0.1\textwidth}
\centering
\raisebox{-0.15cm}{\includegraphics[width=1\textwidth]{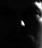}}
\end{minipage}
}
\subfigure{
\vspace{-2mm}
\begin{minipage}[t]{0.1\textwidth}
\centering
\raisebox{-0.15cm}{\includegraphics[width=1\textwidth]{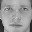}}
\end{minipage}
\begin{minipage}[t]{0.1\textwidth}
\centering
\raisebox{-0.15cm}{\includegraphics[width=1\textwidth]{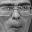}}
\end{minipage}
\begin{minipage}[t]{0.1\textwidth}
\centering
\raisebox{-0.15cm}{\includegraphics[width=1\textwidth]{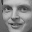}}
\end{minipage}
\begin{minipage}[t]{0.1\textwidth}
\centering
\raisebox{-0.15cm}{\includegraphics[width=1\textwidth]{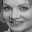}}
\end{minipage}
}
\vspace{-3mm}
\caption{Face images from the Extended Yale B dataset~\cite{YaleB} (top) and the ORL dataset~\cite{orl} (down).}
\vspace{-3mm}
\label{f-yaleorl}
\end{figure}

\textbf{ORL} dataset~\cite{orl} contains overall $400$ human face images of 40 subjects, each having 10 samples.\ Similar to~\cite{ji2017deep}, we resize the original face images from $112\times92$ to $32\times32$.\ For each human subject, the face images were taken under varying lighting conditions, with different facial expressions (e.g., open eyes or closed eyes, smiling or not smiling, etc.), as well as different facial details (e.g., w/ glasses or w/o glasses).\ The \textbf{ORL} dataset is more difficult to tackle than Extended Yale B~\cite{YaleB} for two reasons: 1) its varying face images varies much more complex; 2) it only contains 400 images, much smaller than Extended Yale B (2432 images).\ Figure~\ref{f-yaleorl} (down) shows some face images from ORL.\ 

\textbf{MNIST} dataset~\cite{mnist} contains 60,000 gray-scale images of 10 digits (i.e., $\{0,...,9\}$) in the training set and $10,000$ images in the testing set. The images are of size $28\times28$ pixels. In our experiments, we randomly select $N_{i}\in\{50,100,200,400,600\}$ images for each of the $10$ digits. Following~\cite{sscomp}, for each image, a set of feature vectors is computed using a scattering convolution network (SCN)~\cite{scn}. The final feature vector is a concatenation of the coefficients in each layer of the network, and is translation invariant and deformation stable. Each feature vector is of $3,472$-dimension. The feature vectors for all images are then projected onto a 500-dimensional subspace using PCA.\ Figure~\ref{f-mnist} shows some examples of the handwritten digit images in this dataset. 

\textbf{EMNIST} dataset~\cite{emnist} is an extension of the MNIST dataset that contains gray-scale handwritten digits and letters.\ This dataset contains overall $190,998$ images corresponding to 26 lower case letters.\ We use them as the data for a 26-class clustering problem. The images are of size $28\times28$.\ In our experiments, we randomly select $N_{i}=500$ images for each of the $26$ digits/letters. Following~\cite{You2018ECCV}, for each image, a set of feature vectors is computed using a scattering convolution network (SCN)~\cite{scn}, which is translation invariant and deformation stable.\ The feature vectors are of $3,472$-dimension, the feature vectors for all images are projected onto a 500-dimensional subspace using PCA.

\begin{figure}[t!]
\centering
\begin{minipage}[t]{0.04\textwidth}
\centering
\raisebox{-0.15cm}{\includegraphics[width=1\textwidth]{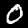}}
\end{minipage}
\begin{minipage}[t]{0.04\textwidth}
\centering
\raisebox{-0.15cm}{\includegraphics[width=1\textwidth]{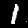}}
\end{minipage}
\begin{minipage}[t]{0.04\textwidth}
\centering
\raisebox{-0.15cm}{\includegraphics[width=1\textwidth]{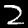}}
\end{minipage}
\begin{minipage}[t]{0.04\textwidth}
\centering
\raisebox{-0.15cm}{\includegraphics[width=1\textwidth]{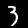}}
\end{minipage}
\begin{minipage}[t]{0.04\textwidth}
\centering
\raisebox{-0.15cm}{\includegraphics[width=1\textwidth]{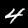}}
\end{minipage}
\begin{minipage}[t]{0.04\textwidth}
\centering
\raisebox{-0.15cm}{\includegraphics[width=1\textwidth]{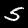}}
\end{minipage}
\begin{minipage}[t]{0.04\textwidth}
\centering
\raisebox{-0.15cm}{\includegraphics[width=1\textwidth]{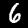}}
\end{minipage}
\begin{minipage}[t]{0.04\textwidth}
\centering
\raisebox{-0.15cm}{\includegraphics[width=1\textwidth]{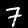}}
\end{minipage}
\begin{minipage}[t]{0.04\textwidth}
\centering
\raisebox{-0.15cm}{\includegraphics[width=1\textwidth]{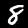}}
\end{minipage}
\begin{minipage}[t]{0.04\textwidth}
\centering
\raisebox{-0.15cm}{\includegraphics[width=1\textwidth]{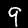}}
\end{minipage}
\vspace{-1mm}
\caption{Digit images from the MNIST dataset~\cite{mnist}.}
\vspace{-3mm}
\label{f-mnist}
\end{figure}

\subsection{Comparison with State-of-the-art Methods}
\label{sec:IV-C}
\begin{figure*}[t]
\centering
\subfigure{
\begin{minipage}[t]{0.24\textwidth}
\raisebox{-0.15cm}{\includegraphics[width=1\textwidth]{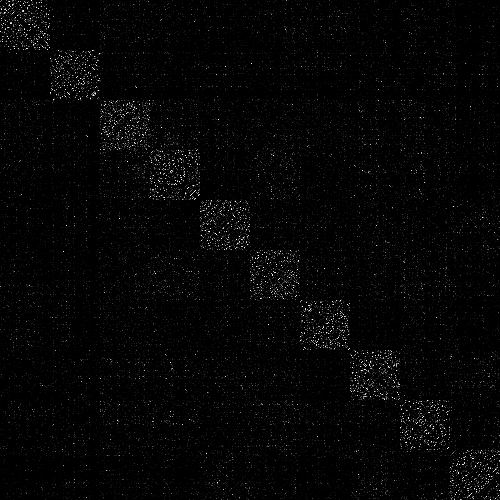}}
\centering{\small (a) SSC~\cite{sscpami}: 16.99\%}
\end{minipage}
\begin{minipage}[t]{0.24\textwidth}
\raisebox{-0.15cm}{\includegraphics[width=1\textwidth]{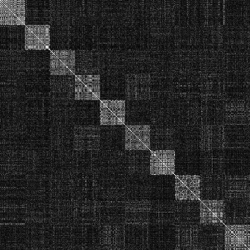}}
\centering{\small (b) LRR~\cite{lrrpami}: 17.97\%}
\end{minipage}
\begin{minipage}[t]{0.24\textwidth}
\raisebox{-0.15cm}{\includegraphics[width=1\textwidth]{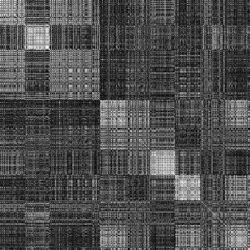}}
\centering{\small (c) LRSC~\cite{lrsc}: 24.16\%}
\end{minipage}
\begin{minipage}[t]{0.24\textwidth}
\raisebox{-0.15cm}{\includegraphics[width=1\textwidth]{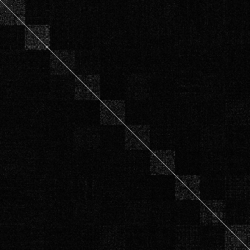}}
\centering{\small (d) LSR~\cite{lsr}: 24.98\%}
\end{minipage}
}\vspace{-3mm}
\subfigure{
\begin{minipage}[t]{0.24\textwidth}
\raisebox{-0.15cm}{\includegraphics[width=1\textwidth]{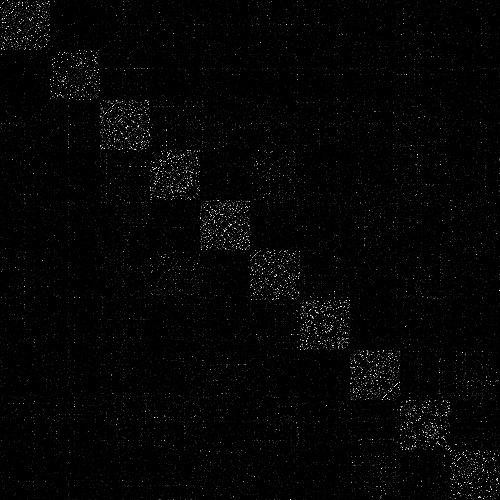}}
\centering{\small (e) S3C~\cite{s3ctip}: 15.92\%}
\end{minipage}
\begin{minipage}[t]{0.24\textwidth}
\raisebox{-0.15cm}{\includegraphics[width=1\textwidth]{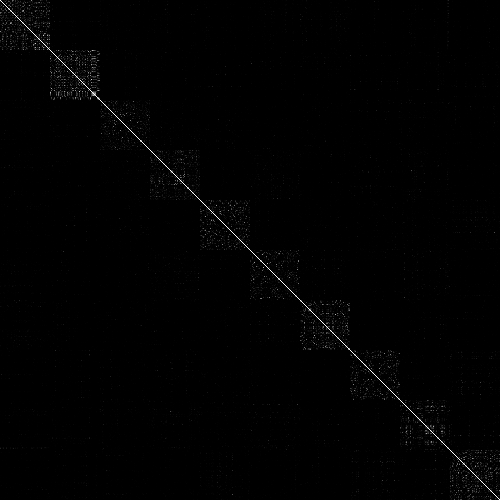}}
\centering{\small (f) RSIM~\cite{rsim}: 18.13\%}
\end{minipage}
\begin{minipage}[t]{0.24\textwidth}
\raisebox{-0.15cm}{\includegraphics[width=1\textwidth]{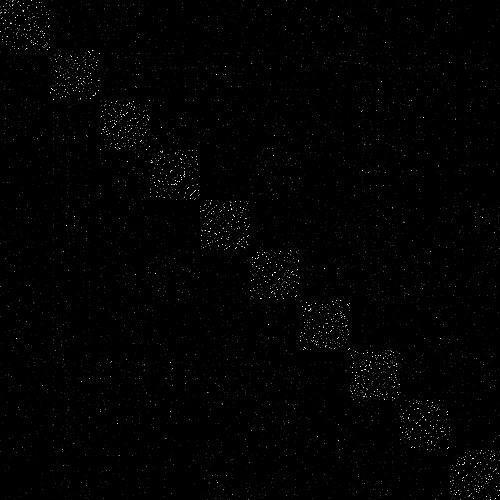}}
\centering{\small (g) SSCOMP~\cite{sscomp}: 16.36\%}
\end{minipage}
\begin{minipage}[t]{0.24\textwidth}
\raisebox{-0.15cm}{\includegraphics[width=1\textwidth]{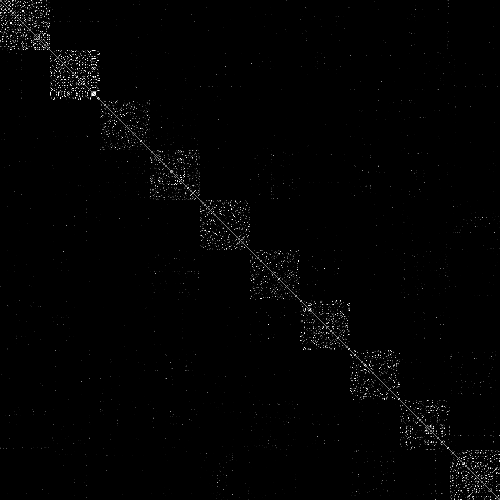}}
\centering{\small (h) SSRSC: \textbf{11.81\%}}
\end{minipage}
}
\vspace{-3mm}
\caption{\linespread{1}\selectfont{Affinity matrices and corresponding average clustering errors by different methods on the handwritten digit images from MNIST~\cite{mnist}.\ 50 images for each digit of $\{0,1,...,9\}$ are used to compute the affinity matrix by different methods.\ All images are normalized to $[0, 1]$ by dividing by the maximal entries in the corresponding affinity matrices.}}
\label{f1}
\end{figure*}

\noindent
\textbf{Comparison Methods}.\ 
We compare the proposed SSRSC with several state-of-the-art SC methods, including SSC~\cite{ssccvpr,sscpami}, LRR~\cite{lrricml,lrrpami}, LRSC~\cite{lrsc}, LSR~\cite{lsr}, SMR~\cite{smr}, S3C~\cite{s3c,s3ctip}, RSIM~\cite{rsim}, SSCOMP~\cite{sscomp}, EnSC~\cite{you2016oracle}, DSC~\cite{ji2017deep}, and ESC~\cite{You2018ECCV}.\ For SSRSC, we fix $s=0.5$, or as reported in ablation study, i.e., $s=0.9$ on Hopkins-155, $s=0.25$ on Extended YaleB, $s=0.4$ on ORL, and $s=0.15$ on MNIST.\ For the SMR method, we use the $J_{1}$ affinity matrix (i.e., (\ref{e7})) in \S\ref{sec:II} as described in~\cite{smr}, for fair comparison. For the other methods, we tune their corresponding parameters on each of the three datasets, i.e., the Hopkins-155 dataset~\cite{benchmark} for motion segmentation, the Extended Yale B dataset~\cite{YaleB} for face clustering, and the MNIST dataset~\cite{mnist} for handwritten digit clustering, to achieve their best clustering results.

\noindent
\textbf{Comparison on Affinity Matrix}.\
The affinity matrix plays a key role in the success of SC methods.\ Here, we visualize the affinity matrix of the proposed SSRSC and the comparison methods, on the SC problem.\ We run the proposed SSRSC algorithm and the competing methods \cite{sscpami,lrrpami,lrsc,lsr,s3c,s3ctip,rsim,sscomp,you2016oracle} on the MNIST dataset~\cite{mnist}.\ The training set contains $6,000$ images for each digit in $\{0,1,...,9\}$. We randomly select $50$ images for each digit, and use the $500$ total images to construct the affinity matrices using these competing SC methods. The results are visualized in Figure~\ref{f1}.\ As can be seen, the affinity matrix of SSRSC shows better connections within each subspace, and generates less noise than most of the other methods.\ Though LSR~\cite{lsr} and RSIM~\cite{rsim} have less noise than our proposed SSRSC, their affinity matrices suffer from strong diagonal entries, indicating that, for these two methods, the data points are mainly reconstructed by themselves.\ With the scaled simplex constraint, the proposed SSRSC algorithm can better exploit the inherent correlations among the data points and achieve better clustering performance than other compared methods.

\noindent
\textbf{Results on Motion Segmentation}.\
We first study how the parameter $s$ influences the average clustering errors of the proposed SSRSC algorithm. The clustering errors with respect to the value of $s$ are plotted in Figure~\ref{f-hopkins}, As can be seen, the proposed SSRSC obtains an average clustering error of 1.53\% when $s=0.5$. Note that SSRSC achieves its lowest average clustering error of 1.04\% when $s=0.9$. The parameter $\lambda$ is set as $\lambda=0.001$.\ We then compare the proposed SSRSC with the other competing SC algorithms~\cite{sscpami,lrrpami,lrsc,lsr,smr,s3c,s3ctip,rsim,sscomp,you2016oracle}.\ The results on the average clustering errors are listed in Table~\ref{t-hopkins}, from which we can see that the proposed SSRSC achieves the lowest clustering error.\ Besides, the speed of the proposed SSRSC approach is only slightly slower than LRSC, LSR, SSCOMP and EnSC, and much faster than the other competing methods.\ Note that the fast speed of SSRSC owes to both efficient solution in each iteration and fewer iterations in the ADMM algorithm.

\begin{figure}[t!]
\centering    
\vspace{-5mm}
\begin{minipage}{0.48\textwidth}
\includegraphics[width=1\textwidth]{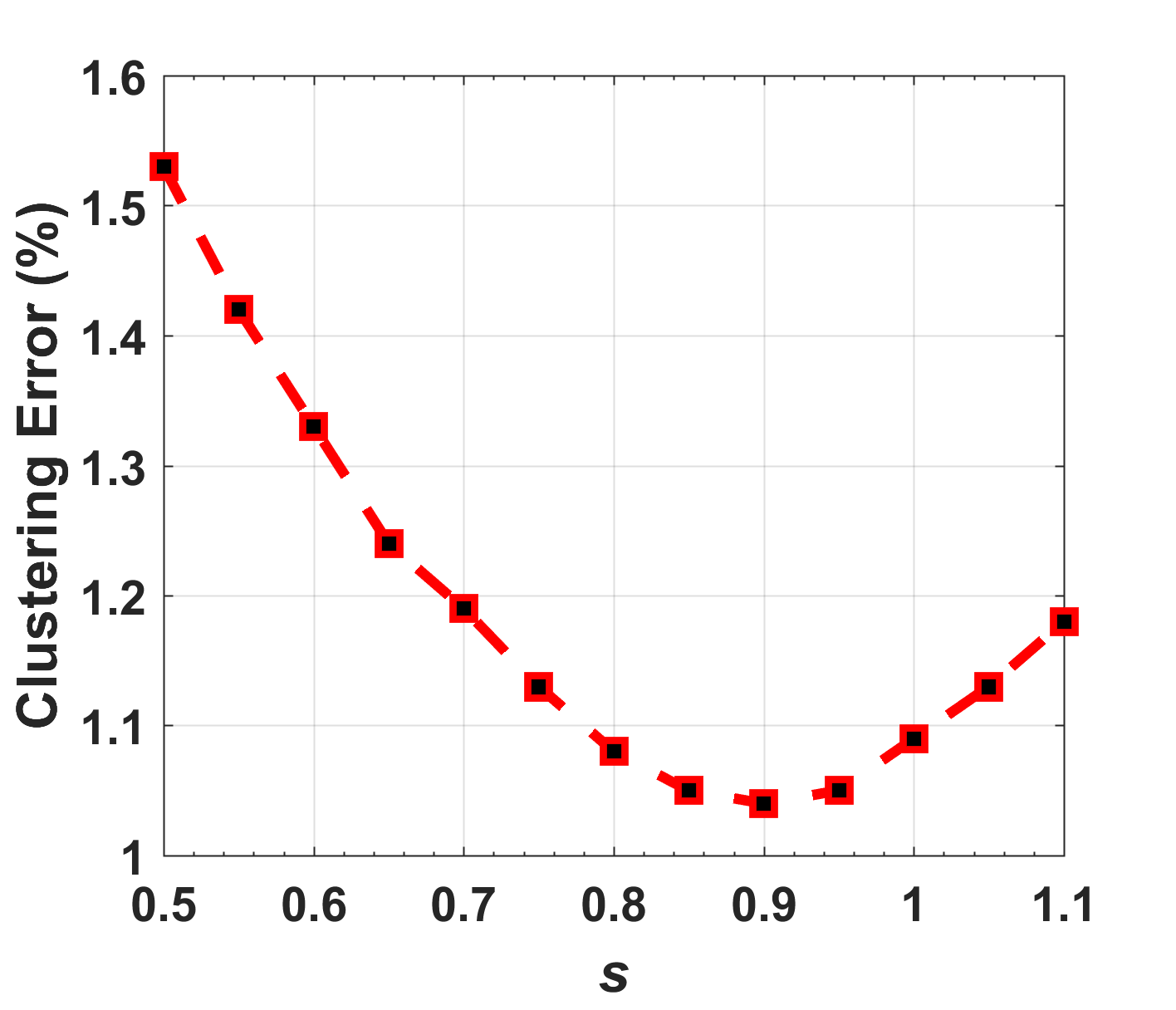}
\end{minipage}
\vspace{-2mm}
\caption{\small Average clustering errors (\%) of the proposed SSRSC algorithm with different scalar $s$, on the Hopkins-155 dataset~\cite{benchmark}.}
\label{f-hopkins}
\end{figure}

\begin{table*}[t!]
\begin{center}
\renewcommand\arraystretch{1}
\footnotesize
\begin{tabular}{c||ccccccccccc}
\Xhline{1pt}
\rowcolor[rgb]{ .85,  .9,  .95}
Method
&
\textbf{SSC}~\cite{sscpami}
&
\textbf{LRR}~\cite{lrrpami}
&
\textbf{LRSC}~\cite{lrsc}
&
\textbf{LSR}~\cite{lsr} 
&
\textbf{SMR}~\cite{smr}
&
\textbf{S3C}~\cite{s3ctip}
&
\textbf{RSIM}~\cite{rsim}
&
\textbf{SSCOMP}~\cite{sscomp}
&
\textbf{EnSC}~\cite{you2016oracle}
&
\textbf{SSRSC}
\\
\hline
Error (\%)
& 2.18 & 3.28 & 5.41 & 2.33 & 2.27 & 2.61 & 1.76 & 5.35 & 1.81 & \textbf{1.53}
\\
Time (s)
& 0.49 & 0.28 & \textbf{0.04} & 0.05 & 0.35 & 2.86 & 0.18 & 0.06 & 0.05 & 0.07
\\
\hline 
\end{tabular}
\vspace{-1mm}
\caption{Average clustering errors (\%) and speed (in seconds) of different algorithms on the Hopkins-155 dataset \cite{benchmark} with the 12-dimensional data points obtained using PCA. SSRSC can achieve clustering error of 1.04\% when $s=0.9$.}
\label{t-hopkins}
\end{center}
\end{table*}

\noindent
\textbf{Results on Human Face Clustering}.\
Here, we compare the proposed SSRSC algorithm with the competing methods, on the commonly used Extended Yale B dataset~\cite{YaleB} and ORL dataset~\cite{orl} for human face clustering.\ On ORL~\cite{orl}, we also compared with the Deep Subspace Clustering (DSC) method, which achieves state-of-the-art performance on this dataset.

We study how the scalar $s$ influences the clustering errors (\%) of the proposed SSRSC algorithm, and take the Extended Yale B dataset~\cite{YaleB} for an example. The average clustering errors with respect to the value of $s$ are plotted in Figure~\ref{f5}.\ As can be seen, SSRSC achieves an average clustering error of 3.26\% when $s=0.5$, and achieves lowest average clustering error of 2.16\% when $s=0.25$. We set $\lambda=0.005$. 

The comparison results of different algorithms are listed in Table~\ref{t-yaleb} and Table~\ref{t-orl}.\ From Table~\ref{t-yaleb}, we observe that, for different numbers ($\{2, 3, 5, 8, 10\}$) of clustering subjects, the average clustering errors of SSRSC are always significantly lower than the other competing methods. For example, when clustering 10 subjects, the average error of SSRSC (when fixing $s=0.5$) is $5.10\%$, while the errors of the other methods are 10.94\% for SSC, 28.54\% for LRR, 30.05\% for LRSC, 28.59\% for LSR, 28.18\% for SMR, 5.16\% for S3C, 6.56\% for RSIM, 14.80\% for SSCOMP, and 5.96\% for EnSC, respectively. The performance gain of SSRSC (5.10\%) over its baseline LSR (28.59\%) demonstrates the power of simplex representation on human face clustering. In terms of running time, the proposed SSRSC algorithm is shown to be comparable to SSCOMP, which is the most efficient algorithm among all competing methods.\ In Table~\ref{t-orl}, we can observe that, the proposed SSRSC method achieves lower clustering error than previous methods except the deep learning based DSC.\ This demonstrates the advantages of the proposed SSR representation over previous sparse or low rank representation frameworks on human face clustering.

\begin{figure}[t!]
    \centering
    \vspace{-5mm}
\begin{minipage}{0.48\textwidth}
\includegraphics[width=1\textwidth]{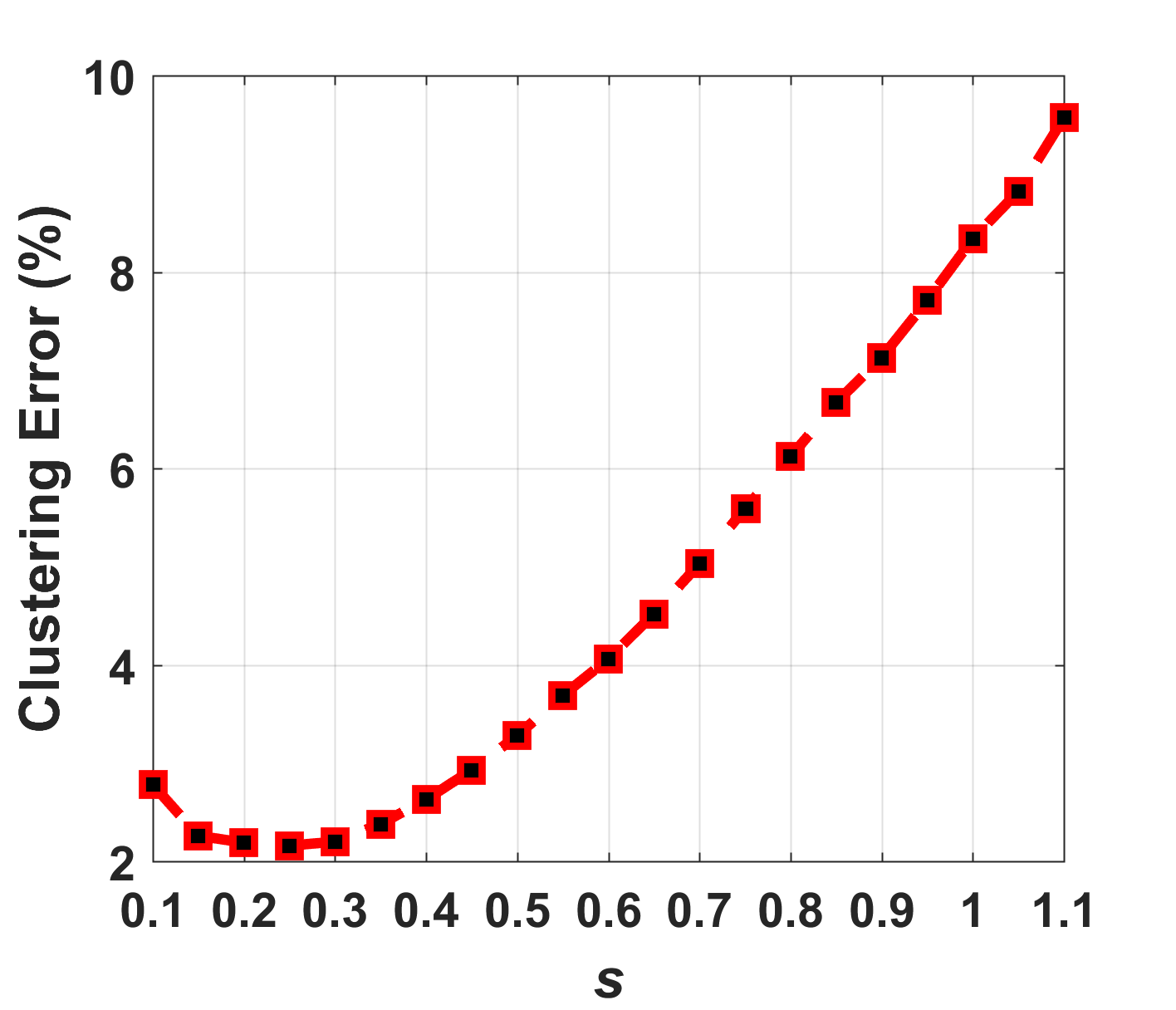}
\end{minipage}
    \vspace{-2mm}
   \caption{\small Average clustering errors (\%) of the proposed SSRSC algorithm with different scalar $s$, on the Extended Yale B dataset~\cite{YaleB}.}
\label{f5}
\end{figure}

\begin{table*}[t!]
\begin{center}
\footnotesize 
\begin{tabular}{r||cc|cc|cc|cc|cc}
\Xhline{1pt}  
\rowcolor[rgb]{ .85,  .9,  .95}
& 
\multicolumn{2}{c|}{{2 Subjects}} 
& 
\multicolumn{2}{c|}{{3 Subjects}} 
& 
\multicolumn{2}{c|}{{5 Subjects}} 
& 
\multicolumn{2}{c|}{{8 Subjects}} 
& 
\multicolumn{2}{c}{{10 Subjects}} 
\\
\rowcolor[rgb]{ .85,  .9,  .95}
\multirow{-2}{*}{\# of Subjects}
&  
\multicolumn{1}{c}{\centering{Error (\%)}}
&  
\multicolumn{1}{c|}{\centering{Time (s)}}
&  
\multicolumn{1}{c}{\centering{Error (\%)}}
&  
\multicolumn{1}{c|}{\centering{Time (s)}}
&  
\multicolumn{1}{c}{\centering{Error (\%)}}
&  
\multicolumn{1}{c|}{\centering{Time (s)}}
&  
\multicolumn{1}{c}{\centering{Error (\%)}}
&  
\multicolumn{1}{c|}{\centering{Time (s)}}
&  
\multicolumn{1}{c}{\centering{Error (\%)}}
&  
\multicolumn{1}{c}{\centering{Time (s)}}
\\
\hline
\textbf{SSC}~\cite{sscpami}
& 1.86& 0.17& 3.10& 0.31& 4.31& 0.74 & 5.85& 2.72& 10.94& 4.27
\\
\textbf{LRR}~\cite{lrrpami}
& 4.99& 0.15& 6.86& 0.29& 14.04 & 0.65 & 24.18& 1.30 & 28.54 & 1.81 
\\
\textbf{LRSC}~\cite{lrsc}
& 4.42 &\textbf{0.03}& 6.14 & 0.06 & 13.06 & 0.13& 24.12& 0.28& 30.05 & 0.45
\\
\textbf{LSR}~\cite{lsr}
& 4.98&\textbf{0.03}& 6.87 & 0.05 & 14.14& 0.12& 24.39& 0.25& 28.59 & 0.36 
\\
\textbf{SMR}~\cite{smr}
& 4.68& 0.21& 6.56& 0.30& 13.78 & 0.76& 24.25& 2.00& 28.18& 2.59
\\
\textbf{S3C}~\cite{s3ctip}
& 1.27& 1.31& 2.71& 2.58& 3.41& 3.52 & 4.13& 7.14& 5.16& 13.98
\\
\textbf{RSIM}~\cite{rsim}
& 2.17& 0.09& 2.96 & 0.17& 4.13& 0.47& 5.82& 1.77& 6.56& 3.19
\\
\textbf{SSCOMP}~\cite{sscomp}
& 2.76& \textbf{0.03}& 5.35& \textbf{0.04} & 7.89& \textbf{0.08}& 11.40& 0.18 & 14.80& \textbf{0.27}
\\
\textbf{EnSC}~\cite{you2016oracle}
& 1.64& \textbf{0.03}& 2.91& 0.05& 3.84& 0.11& 5.84& 0.20& 5.96&0.29
\\
\hline
\textbf{SSRSC}
& 1.26& \textbf{0.03}& 1.58& \textbf{0.04}& 2.70& \textbf{0.08} & 5.66& \textbf{0.17} & 5.10& \textbf{0.27}
\\
\textbf{SSRSC ($s=0.25$)}
& \textbf{0.96}& \textbf{0.03} & \textbf{1.32}& \textbf{0.04}& \textbf{2.22} & \textbf{0.08} & \textbf{3.00}& \textbf{0.17} & \textbf{3.18} & \textbf{0.27}
\\
\hline 
\end{tabular} 
\vspace{-1mm}
\caption{Average clustering errors (\%) and speed (in seconds) of different algorithms on the Extended Yale B dataset~\cite{YaleB} with the $6n$-dimensional ($n$ is the number of subjects) data points obtained using PCA.}
\label{t-yaleb}
\vspace{-3mm}
\end{center}
\end{table*}

\begin{table*}[t!]
\begin{center}
\renewcommand\arraystretch{1}
\footnotesize
\begin{tabular}{c||ccccccccccc}
\Xhline{1pt}
\rowcolor[rgb]{ .85,  .9,  .95}
Method
&
\textbf{SSC}~\cite{sscpami}
&
\textbf{LRR}~\cite{lrrpami}
&
\textbf{LRSC}~\cite{lrsc}
&
\textbf{LSR}~\cite{lsr} 
&
\textbf{SMR}~\cite{smr}
&
\textbf{RSIM}~\cite{rsim}
&
\textbf{SSCOMP}~\cite{sscomp}
&
\textbf{EnSC}~\cite{you2016oracle}
&
\textbf{DSC}~\cite{ji2017deep}
&
\textbf{SSRSC}
\\
\hline
Error (\%)
& 32.50 & 38.25 & 32.50 & 27.25 & 25.75 & 26.25 & 36.00 & 23.25 & \textbf{14.00} & 21.75
\\
Time (s) & 12.32 & 6.01 & 1.43 & 1.21 & 7.97 & 9.81 & 0.77 & 0.82 & \textbf{0.23} & 0.89
\\
\hline
\end{tabular}
\vspace{-1mm}
\caption{Average clustering errors (\%) and speed (in seconds) of different algorithms on the ORL dataset~\cite{orl} with the $32\times32$ data points obtained by resizing. SSRSC can achieve clustering error of 21.25\% when $s=0.4$.}
\label{t-orl}
\vspace{-3mm}
\end{center}
\end{table*}

\begin{table*}[ht!]
\begin{center}
\footnotesize 
\begin{tabular}{r||rr|rr|rr|rr|rr}
\Xhline{1pt}   
\rowcolor[rgb]{ .85,  .9,  .95}
& 
\multicolumn{2}{c|}{{500}} 
& 
\multicolumn{2}{c|}{{1,000}} 
& 
\multicolumn{2}{c|}{{2,000}} 
& 
\multicolumn{2}{c|}{{4,000}} 
& 
\multicolumn{2}{c}{{6,000}} 
\\           
\rowcolor[rgb]{ .85,  .9,  .95}
\multirow{-2}{*}{\# of Images}
&  
\multicolumn{1}{c}{\centering{Error (\%)}}
&  
\multicolumn{1}{c|}{\centering{Time (s)}}
&  
\multicolumn{1}{c}{\centering{Error (\%)}}
&  
\multicolumn{1}{c|}{\centering{Time (s)}}
&  
\multicolumn{1}{c}{\centering{Error (\%)}}
&  
\multicolumn{1}{c|}{\centering{Time (s)}}
&  
\multicolumn{1}{c}{\centering{Error (\%)}}
&  
\multicolumn{1}{c|}{\centering{Time (s)}}
&
\multicolumn{1}{c}{\centering{Error (\%)}}
&
\multicolumn{1}{c}{\centering{Time (s)}}
\\
\hline
\textbf{SSC}~\cite{sscpami}
& 16.99 & 27.36 & 15.95 & 49.25 & 14.42& 94.84  & 14.00& 239.95 & 14.40 & 423.47 
\\
\textbf{LRR}~\cite{lrrpami}
& 17.97& 15.64 & 16.88& 28.14 & 16.63 & 54.19& 15.42 & 137.11& 15.45 & 241.98
\\
\textbf{LRSC}~\cite{lrsc}
& 24.16& 0.34  & 21.58 & 0.96 & 21.91 & 4.74 &20.94 & 27.40 & 20.09 & 78.21 
\\
\textbf{LSR}~\cite{lsr}
& 24.98& \textbf{0.29} & 20.24 & 0.91& 20.56 & 4.74& 22.24& 28.86 & 20.12 & 83.14 
\\
\textbf{SMR}~\cite{smr}
& 18.45& 19.56 & 13.97& 35.18 & 9.46& 67.75& 9.14& 171.39&7.36 &302.48
\\
\textbf{S3C}~\cite{s3ctip}
& 15.92 & 135.20 & 12.23 & 263.33& 10.54 & 521.93 & 9.46& 1308.50 & 9.34& 2544.41
\\
\textbf{RSIM}~\cite{rsim}
& 18.13& 10.06& 15.70& 18.09& 11.48& 34.84 & 10.53 & 88.14& 10.21 & 155.56
\\
\textbf{SSCOMP}~\cite{sscomp}
& 16.36& 0.32 & 13.33 & \textbf{0.88}& 9.40& \textbf{4.50} &8.78& \textbf{26.94}& 8.75& \textbf{76.84} 
\\
\textbf{EnSC}~\cite{you2016oracle}
& 13.45 & 0.37& 9.31& 0.98& 7.69& 4.67& 6.71& 27.14& 5.86& 77.24
\\
\hline
\textbf{SSRSC}
& 11.81& 0.34 & 6.90& {0.98} & 5.65& 5.01 & 5.31& 30.06& 4.53& {85.68}
\\
\textbf{SSRSC ($s=0.15$)}
& \textbf{10.29} & {0.35} & \textbf{5.40}& {0.99}& \textbf{4.36}& 5.03 & \textbf{3.09}& 30.11& \textbf{2.52}& {85.74}  
\\
\hline
\end{tabular} 
\vspace{-1mm}
\caption{Average clustering errors (\%) and speed (in seconds) of different algorithms on the MNIST dataset~\cite{mnist}.\ The features are extracted from a scattering network and projected onto a 500-dimensional subspace using PCA.\ The experiments are repeated for 20 times and the average results are reported.}
\label{t-mnist}
\vspace{-3mm}
\end{center}
\end{table*}

\begin{table*}[t!]
\begin{center}
\renewcommand\arraystretch{1}
\footnotesize
\begin{tabular}{r||ccccccccccc}
\Xhline{1pt}
\rowcolor[rgb]{ .85,  .9,  .95}
Method
&
\textbf{SSC}~\cite{sscpami}
&
\textbf{LRR}~\cite{lrrpami}
&
\textbf{LRSC}~\cite{lrsc}
&
\textbf{LSR}~\cite{lsr} 
&
\textbf{SMR}~\cite{smr}
&
\textbf{RSIM}~\cite{rsim}
&
\textbf{SSCOMP}~\cite{sscomp}
&
\textbf{EnSC}~\cite{you2016oracle}
&
\textbf{ESC}~\cite{You2018ECCV}
&
\textbf{SSRSC}
\\
\hline
Error (\%)
&30.11 & 33.89 & 31.51 & 32.21 & 29.58 & 30.15 & 35.66 & 26.23 &25.42& \textbf{23.45}
\\
Time (s) & 863.29 & 495.13 &162.32 &172.28 & 621.89 & 320.14 & 159.22 & 160.98 & 
\textbf{157.67} & 
{186.23}
\\
\hline
\end{tabular}
\vspace{-1mm}
\caption{Average clustering errors (\%) and speed (in seconds) of different algorithms on the EMNIST dataset \cite{emnist}.\ The features are extracted from a scattering network and projected onto a 500-dimensional subspace using PCA.\ The experiments are repeated for 10 times and the average results are reported.\ SSRSC can achieve clustering error of 22.76\% when $s=0.35$.}
\vspace{-5mm}
\label{t-emnist}
\end{center}
\end{table*}

\noindent
\textbf{Results on Handwritten Digit Clustering}.\
For the digit clustering problem, we evaluated the proposed SSRSC with the competing methods on the MNIST dataset~\cite{mnist} and the EMNIST dataset~\cite{emnist}.\ We follow the experimental settings in SSCOMP~\cite{sscomp}.\ On the MNIST~\cite{mnist}, the testing data consists of a randomly chosen number of
$N_{i}\in\{50, 100, 200, 400, 600\}$ images for each of the 10 digits, while on the EMNIST~\cite{emnist}, the testing data consists of $N_{i}=500$ images for each of the 26 digits or letters.\ The feature vectors are extracted from the scattering convolution network \cite{scn}.\ The features extracted from the digit/letter images are originally $3,472$-dimensional, and are projected onto a $500$-dimensional subspace using PCA.

\begin{figure}[t]
\centering
    \vspace{-5mm}
\begin{minipage}{0.48\textwidth}
\includegraphics[width=1\textwidth]{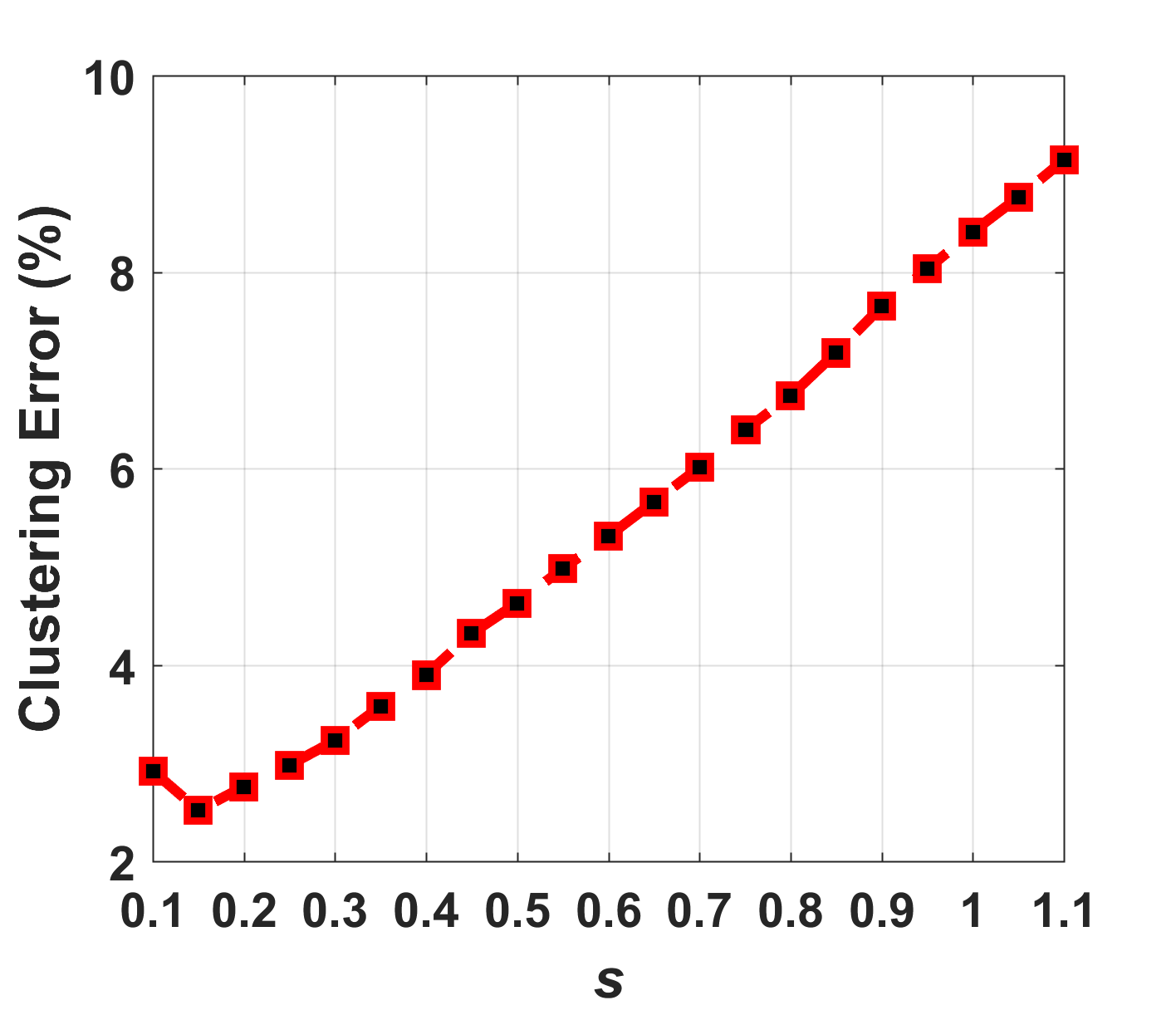}
\end{minipage}
    \vspace{-2mm}
\caption{\small Average clustering errors (\%) of the proposed SSRSC with different $s$ on MNIST~\cite{mnist}, with $600$ images for each digit.}
\label{f7}
\end{figure}

We evaluate the influence of the scalar $s$ on the average clustering error (\%) of SSRSC on the MNIST dataset. The curve of clustering errors w.r.t. $s$ is plotted in Figure~\ref{f7}. Average clustering errors are computed over 20 trials sampled from $6,000$ randomly selected images ($N_{i}=600$).\ As can be seen, the proposed SSRSC algorithm achieves an average clustering error of 4.53\%, and obtains its lowest average clustering error (2.52\%) when $s=0.15$, with 600 images for each digit from the MNIST dataset. Similar results can be observed on the EMNIST dataset~\cite{emnist}. The parameter $\lambda$ is set as $\lambda=0.01$. 

We list the average clustering errors (\%) of the comparison methods in Tables~\ref{t-mnist} and~\ref{t-emnist}. It can be seen that the proposed SSRSC algorithm performs better than all other competing methods. For example, on MNIST dataset, SSRSC achieves clustering errors of 11.81\%, 6.90\%, 5.65\%, 5.31\%, and 4.53\% when the number of digit images are 500, 1000, 2000, 4000, and 6000, respectively. On EMNIST dataset, SSRSC achieves clustering error of 23.45\%, when the number of images for each digit/letter are $500$. The results of the proposed SSRSC on the two datasets are much better than the other competing algorithms, including the recently proposed EnSC and ESC, respectively. This validates the advantages of the proposed scaled simplex representation over previous sparse or low-rank representation, for hand-written digit/letter clustering.

\begin{table*}[t]
\begin{center}
\small
\begin{tabular}{r||c|c|c|c|c}
\Xhline{1pt}
\rowcolor[rgb]{ .85,  .9,  .95}
&&& \multicolumn{2}{c|}{{Constraint}} &
\\
\rowcolor[rgb]{ .85,  .9,  .95}
\multicolumn{1}{c||}{\multirow{-2}{*}{Model}}
&
\multirow{-2}{*}{Data Term}
&
\multirow{-2}{*}{Regularization Term}
&
$\bm{C}\ge0$ 
&
$\bm{1}^{\top}\bm{C}=s\bm{1}^{\top}$
& 
\multirow{-2}{*}{Solution}
\\
\hline
LSR~(\ref{e-lsr})
& 
\multirow{4}{*}{$\|\bm{X}-\bm{X}\bm{C}\|_{F}^{2}$ }
& 
\multirow{4}{*}{$\|\bm{C}\|_{F}^{2}$} 
& 
\xmark
& 
\xmark
& 
$\overline{\bm{C}}=(\bm{X}^{\top}\bm{X}+\lambda\bm{I})^{-1}\bm{X}^{\top}\bm{X}$ 
\\
NLSR~(\ref{e-nlsr})
& 
& 
& 
\cmark
& 
\xmark
& 
See Supplementary File
\\
SLSR~(\ref{e-slsr})
& 
& 
& 
\xmark
& 
\cmark
& 
See Supplementary File
\\
SSRSC\ ~(\ref{e9})
& 
& 
&
\cmark
&
\cmark
& 
See \S\ref{sec:III-B} 
\\
\hline 
\end{tabular}
\vspace{-1mm}
\caption{Summary of the proposed SSRSC and three baseline methods LSR, NLSR, and SLSR.}
\vspace{-3mm}
\label{t-summary}
\end{center}
\end{table*}

\begin{table}[t]
\begin{center}
\renewcommand\arraystretch{1}
\small
\begin{tabular}{r||ccccc}
\Xhline{1pt}
\rowcolor[rgb]{ .85,  .9,  .95}
Method
&
\textbf{LSR}
&
\textbf{NLSR}
&
\textbf{SLSR}
&
\textbf{SSRSC} 
\\
\hline
Error (\%)
& 3.67 & 1.75 & 3.16 & \textbf{1.04}
\\
\hline
\end{tabular}
\vspace{-1mm}
\caption{Average clustering errors (\%) of LSR, NLSR, SLSR ($s$$=$$0.9$), and SSRSC ($s$$=$$0.9$) on Hopkins-155~\cite{benchmark}, with the 12-dimensional data points obtained by PCA.}
\label{t-ablahopkins}
\end{center}
\end{table}  

\begin{table}[t]
\vspace{-0mm}
\begin{center}
\renewcommand\arraystretch{1}
\small
\begin{tabular}{r||rrrr}
\Xhline{1pt}
\rowcolor[rgb]{ .85,  .9,  .95}
$n$
&
\textbf{LSR}
&
\textbf{NLSR}
&
\textbf{SLSR}
&
\textbf{SSRSC}
\\
\hline
2 
& 4.98 & 3.46 & 4.35 & \textbf{0.96}
\\
\hline
3
& 6.87 & 4.95 & 6.56 & \textbf{1.32}
\\
\hline
5
& 14.14 & 10.03 & 13.77 & \textbf{2.22}
\\
\hline
8
& 24.39 & 17.74 & 22.14 & \textbf{3.00}
\\
\hline
10
& 28.59 & 20.29 & 26.31 & \textbf{3.18}
\\
\hline
\end{tabular}
\vspace{-1mm}
\caption{Average clustering errors (\%) of LSR, NLSR, SLSR ($s$$=$$0.22$), and SSRSC ($s$$=$$0.25$) on Extended Yale B~\cite{YaleB}, with the 6$n$-dimensional ($n$ is the number of subjects) data points obtained by PCA.}
\label{t-ablayaleb}
\end{center}
\end{table}

\subsection{Ablation Study: Effectiveness of Simplex Representation}
\label{sec:IV-D}
We perform comprehensive ablation studies on the Hopkins-155~\cite{benchmark}, Extended Yale B~\cite{YaleB}, and MNIST~\cite{mnist} datasets to validate the effectiveness of the proposed SSRSC model.\ SSRSC includes two constraints, i.e., $\bm{c}\ge0$ and $\bm{1}^{\top}\bm{c}=s$.\ To analyze the effectiveness of each constraint, we compare the proposed SSRSC model with several baseline methods. 

The first baseline is the trivial Least Square Regression (LSR) model as follows:
\begin{equation}
\begin{split}
\label{e-lsr}
\textbf{LSR:}
\ 
\min_{\bm{C}}
\|
\bm{X}
-
\bm{X}\bm{C}
\|_{F}^{2}
+
\lambda
\|
\bm{C}
\|
_{F}^{2}.
\end{split}
\end{equation}
By removing each individual constraint in the simplex constraint of the proposed SSRSC model~(\ref{e9}), we have the second baseline method, called the Non-negative Least Square Regression (NLSR) model:
\begin{equation}
\begin{split}
\label{e-nlsr}
\textbf{NLSR:}
\ 
\min_{\bm{C}}
\|
\bm{X}
-
\bm{X}\bm{C}
\|_{F}^{2}
+
\lambda
\|
\bm{C}
\|
_{F}^{2}
\quad
\text{s.t.}
\quad
\bm{C}
\ge
0
.
\end{split}
\end{equation}

The NLSR model can be formulated by either removing the scaled affinity constraint $\bm{1}^{\top}\bm{c}=s$ from SSRSC or adding a non-negative constraint to the LSR model~(\ref{e-lsr}). For structural clearance of this manuscript, we put the solution of the NLSR model~(\ref{e-nlsr}) in the \textsl{Supplementary File}. We can also remove the non-negative constraint $\bm{C}\ge0$ in the scaled simplex set $\{\bm{C}\in\mathbb{R}^{N\times N}|\bm{C}\ge0,\bm{1}^{\top}\bm{C}=s\bm{1}^{\top}\}$, and obtain a Scaled-affine Least Square Regression (SLSR) model:
\begin{equation}
\begin{split}
\label{e-slsr}
\textbf{SLSR:}
\ 
\min_{\bm{C}}
\|
\bm{X}
-
\bm{X}\bm{C}
\|_{F}^{2}
+
\lambda
\|
\bm{C}
\|
_{F}^{2}
\quad
\text{s.t.}
\quad
\bm{1}^{\top}\bm{C}=s\bm{1}^{\top}
.
\end{split}
\end{equation}
The solution of the SLSR model~(\ref{e-slsr}) is also provided in the \textsl{Supplementary File}. 

The proposed SSRSC method, as well as the three baseline methods, can be expressed in a standard form:
\begin{equation}
\label{e-form}
\min_{\bm{C}} \textsl{Data Term} + \textsl{Regularization Term} 
\ 
\text{s.t.}
\ 
\textsl{Constraints}
.
\end{equation}
For the NLSR, SLSR, and SSRSC methods, the data and regularization terms are the same as that of the basic LSR model~(\ref{e9}), i.e., $\|\bm{X}-\bm{X}\bm{C}\|_{F}^{2}$ and $\lambda\|\bm{C}\|_{F}^{2}$, respectively. The difference among these comparison methods lies in the~\textsl{Constraints}.\ In Table~\ref{t-summary}, we summarize the proposed SSRSC, and the three baseline methods LSR, NLSR, and SLSR. 

In Tables~\ref{t-ablahopkins}-\ref{t-ablamnist}, we list the average clustering errors (\%) of the proposed SSRSC and three baseline methods on the Hopkins-155~\cite{benchmark}, Extended Yale B~\cite{YaleB}, and MNIST~\cite{mnist} datasets.\ Note that here we tune the parameter $s$ to achieve the best performance of SSRSC on different datasets.

\noindent
\textbf{Effectiveness of Non-negative Constraint}.\
The effectiveness of non-negative constraint $\bm{C}\ge0$ in the proposed SSRSC model can be validated from two complementary aspects.\ First, it can be validated by evaluating the performance improvement of the baseline methods NLSR~(\ref{e-nlsr}) over LSR~\cite{lsr}.\ Since the only difference between them is the additional non-negative constraint in NLSR~(\ref{e-nlsr}), the performance gain of NSLR over LSR can directly reflect the effectiveness of the non-negative constraint.\ Second, the effectiveness of non-negative constraint can also be validated by comparing the performance of the proposed SSRSC~(\ref{e9}) and the baseline method SLSR~(\ref{e-slsr}).\ Since SLSR~(\ref{e-slsr}) lacks a non-negative constraint when compared to SSRSC, the performance gain of SSRSC over SLSR should be due to the introduced non-negativity.

The results listed in Table~\ref{t-ablahopkins}, show that, on the Hopkins-155 dataset, the baseline methods LSR and NLSR achieve average clustering errors of 3.67\% and 1.75\%, respectively. This demonstrates that, by adding the non-negative constraint to LSR, the resulting baseline method NLSR has a significantly reduced clustering error. Besides, the average clustering errors of SSRSC ($s=0.9$) and SLSR ($s=0.9$) are 1.04\% and 3.16\%, respectively. This shows that, if we remove the non-negative constraint from SSRSC, the resulting SLSR will have a significantly lower clustering performance. Similar trends can be found from Tables~\ref{t-ablayaleb} and~\ref{t-ablamnist} on the Extended Yale B dataset~\cite{YaleB} and the MNIST dataset~\cite{mnist}, respectively. These results validate the contribution of the non-negative constraint to the success of the proposed SSRSC model.

\begin{table}[t]
\vspace{-0mm}
\begin{center}
\renewcommand\arraystretch{1}
\small
\begin{tabular}{r||rrrr}
\Xhline{1pt}
\rowcolor[rgb]{ .85,  .9,  .95}
\# of Images
&
\textbf{LSR}
&
\textbf{NLSR}
&
\textbf{SLSR}
&
\textbf{SSRSC}
\\
\hline
500 
& 24.98 & 15.29 & 22.00 & \textbf{10.29}
\\
\hline
1,000
& 20.24 & 12.66 & 18.59 & \textbf{5.40}
\\
\hline
2,000
& 20.56 & 10.13 & 16.87 & \textbf{4.36}
\\
\hline
4,000
& 22.24 & 9.07 & 15.48 & \textbf{3.09}
\\
\hline
6,000
& 20.12 & 8.34 & 14.23 & \textbf{2.52}
\\
\hline 
\end{tabular}
\vspace{-1mm}
\caption{Average clustering errors (\%) of LSR, NLSR, SLSR ($s=0.24$), and SSRSC ($s=0.15$) on the MNIST dataset~\cite{mnist}.\ The features of data point are extracted from a scattering network~\cite{scn} and projected onto a 500-dimensional subspace obtained using PCA.\ The experiments are independently repeated 20 times.}
\vspace{-3mm}
\label{t-ablamnist}
\end{center}
\end{table}

\noindent
\textbf{Effectiveness of Scaled-Affine Constraint}.\
The effectiveness of scaled-affine constraint can be validated from two aspects. First, it can be validated by comparing the baseline methods LSR~(\ref{e-lsr}) and SLSR~(\ref{e-slsr}), which are summarized in Table~\ref{t-summary}.\ Since the only difference between them is that SLSR contains an additional scaled affine constraint over LSR, the performance gain of SLSR over LSR can directly validate the scaled affine constraint's effectiveness.\ Second, the effectiveness can also be validated by comparing the proposed SSRSC~(\ref{e9}) and the baseline NLSR~(\ref{e-nlsr}).\ This is because NLSR removes the scaled affine constraint from SSRSC. 

From the results listed in Tables~\ref{t-ablahopkins}, we observe that, on the Hopkins-155 dataset~\cite{benchmark}, the proposed SSRSC ($s=0.9$) can achieve lower clustering error (1.04\%) than the baseline method NLSR (1.75\%). Similar conclusions can be drawn from Tables~\ref{t-ablayaleb} and~\ref{t-ablamnist} for experiments on the Extended Yale B dataset~\cite{YaleB} and the MNIST dataset~\cite{mnist}, respectively. All these comparisons demonstrate that the scaled affine constraint is another essential factor for the success of SSRSC.

\noindent
\textbf{Effectiveness of the Simplex Constraint}.\
We find that the proposed simplex constraint, i.e., the integration of the non-negative and scaled affine constraints, can further boost the performance of SC. This can be validated by comparing the performance of the proposed SSRSC~(\ref{e9}) and the baseline method LSR~(\ref{e-lsr}).\ The results listed in Tables~\ref{t-ablahopkins}-\ref{t-ablamnist} show that, on all three commonly used datasets, the proposed SSRSC can achieve much lower clustering errors than the baseline method LSR. For example, on the MNIST dataset, the clustering errors of SSRSC ($s=0.15$) are $10.29\%$, $5.40\%$, $4.36\%$, $3.09\%$, and $2.52\%$ when we randomly select $50, 100, 200, 400$, and $600$ images for each of the $10$ digits, respectively. Meanwhile, the corresponding clustering errors of the baseline method LSR are $24.98\%$, $20.24\%$, $20.56\%$, $22.24\%$, $20.12\%$, respectively. SSRSC performs much better than LSR in all cases. Similar trends can also be found in Table~\ref{t-ablahopkins} for the Hopkins-155 dataset and in Table~\ref{t-ablayaleb} for the Extended Yale B dataset.

\noindent
\textbf{Influence of diagonal constraint}.\ One key difference of our SSRSC model to previous models is that, in SSRSC, we get rid of diagonal constraint $\text{diag}(\bm{C})=\bm{0}$ considering that the practical data is often noisy.\ To achieve this, we compare with method on the Hopkins155 dataset, together with the SSRSC model with additional diagonal constraint of $\text{diag}(\bm{C})=\bm{0}$ (we call this method SSRSC-diag).\ The results are listed in Table~\ref{t-diag}.\ One can see that the variant SSRSC-diag achieves inferior performance with the original SSRSC.\ We also compare with a state-of-the-art clustering method, i.e., RGC~\cite{rgc2019}, designed specifically for clustering noisy data.\ RGC achieves slightly better performance than SSRSC-diag, but is still inferior to our SSRSC without diagonal constraint.\ This demonstrates the necessity of getting rid of the diagonal constraint in our SSRSC model to deal with noisy data.

\begin{table}[htp]
\begin{center}
\renewcommand\arraystretch{1}
\small
\begin{tabular}{cccccc}
\Xhline{1pt}
\rowcolor[rgb]{ .85,  .9,  .95}
Method
&
\textbf{SSRSC}
&
\textbf{SSRSC-diag}
&
\textbf{RGC}~\cite{rgc2019}
\\
\hline
Error (\%)
& \textbf{1.04} & 1.87 & 1.76
\\
\Xhline{1pt}
\end{tabular}
\end{center}
\vspace{-3mm}
\caption{Average clustering errors (\%) of SSRSC ($s=0.9$), SSRSC-diag ($s=0.8$) and RGC on Hopkins-155 dataset~\cite{benchmark}, with the 12-dimensional data points obtained using PCA.}
\label{t-diag}
\end{table}

\section{Conclusion}
\label{sec:V}
In this paper, we proposed a scaled simplex representation (SSR) based model for spectral clustering based subspace clustering (SC).\ Specifically, we introduced the non-negative and scaled affine constraints into a simple least square regression model.\ The proposed SSRSC model can reveal the inherent correlations among data points in a highly discriminative manner.\ Based on the SSRSC model, a novel spectral clustering based algorithm was developed for subspace clustering.\ Extensive experiments on three benchmark clustering datasets demonstrated that the proposed SSRSC algorithm is very efficient and achieves better clustering performance than state-of-the-art subspace clustering algorithms.\ The significant improvements of SSRSC over the baseline models demonstrated the effectiveness of the proposed simplex representation.\

Our work can be extended in at least three directions.\ First, it is promising to extend the proposed SSRSC model to nonlinear version by using kernel approaches~\cite{peng2017integrating,kangkernel,kanglowrankkernel,peng2019discriminative}.\ Second, it is worthy to accelerate the proposed algorithm for scalable subspace clustering~\cite{You2018ECCV,Peng_2019_CVPR}.\ Third, adapting the proposed SSRSC model for imbalanced datasets~\cite{You2018ECCV,peng2019discriminative} is also a valuable direction.


{
\small
\bibliographystyle{unsrt}
\bibliography{SSRSC}

\begin{thebibliography}{1}

\bibitem{courant1943}
R.~Courant.
\newblock Variational methods for the solution of problems of equilibrium and
  vibrations.
\newblock {\em Bulletin of the American Mathematical Society}, 49(1):1--23,
  1943.

\bibitem{Eckstein1992}
J.~Eckstein and D.~P. Bertsekas.
\newblock On the {D}ouglas--{R}achford splitting method and the proximal point
  algorithm for maximal monotone operators.
\newblock {\em Mathematical Programming}, 55(1):293--318, 1992.

\bibitem{admm}
S.~Boyd, N.~Parikh, E.~Chu, B.~Peleato, and J.~Eckstein.
\newblock Distributed optimization and statistical learning via the alternating
  direction method of multipliers.
\newblock {\em Foundations and Trends in Machine Learning}, 3(1):1--122,
  January 2011.

\bibitem{duchi2008efficient}
J.~Duchi, S.~Shalev-Shwartz, Y.~Singer, and T.~Chandra.
\newblock Efficient projections onto the l 1-ball for learning in high
  dimensions.
\newblock In {\em ICML}, pages 272--279. ACM, 2008.

\end{thebibliography}


\begin{thebibliography}{10}

\bibitem{vidalsc}
R.~Vidal.
\newblock Subspace clustering.
\newblock {\em IEEE Signal Processing Magazine}, 28(2):52--68, 2011.

\bibitem{Ksubspaces}
J.~Ho, M.-H. Yang, J.~Lim, K.-C. Lee, and D.~Kriegman.
\newblock Clustering appearances of objects under varying illumination
  conditions.
\newblock In {\em CVPR}, pages 11--18, 2003.

\bibitem{asc}
M.~C. Tsakiris and R.~Vidal.
\newblock Algebraic clustering of affine subspaces.
\newblock {\em IEEE Transactions on Pattern Analysis and Machine Intelligence},
  40(2):482--489, 2018.

\bibitem{multiplemotion}
J.~Costeira and T.~Kanade.
\newblock A multibody factorization method for independent moving objects.
\newblock {\em International Journal of Computer Vision}, 29(3), 1998.

\bibitem{gpca}
R.~Vidal, Y.~Ma, and S.~Sastry.
\newblock Generalized principal component analysis.
\newblock {\em IEEE Transactions on Pattern Analysis and Machine Intelligence},
  27(12):1--15, 2005.

\bibitem{mppca}
M.~Tipping and C.~Bishop.
\newblock Mixtures of probabilistic principal component analyzers.
\newblock {\em Neural Computation}, 11(2):443--482, 1999.

\bibitem{msl}
A.~Gruber and Y.~Weiss.
\newblock Multibody factorization with uncertainty and missing data using the
  em algorithm.
\newblock In {\em CVPR}, 2004.

\bibitem{ransac}
M.~Fischler and R.~Bolles.
\newblock Random sample consensus: a paradigm for model fitting with
  applications to image analysis and automated cartography.
\newblock {\em Communications of the ACM}, 24(6):381--395, 1981.

\bibitem{alc}
S.~Rao, R.~Tron, R.~Vidal, and Y.~Ma.
\newblock Motion segmentation via robust subspace separation in the presence of
  outlying, incomplete, or corrupted trajectories.
\newblock {\em IEEE Transactions on Pattern Analysis and Machine Intelligence},
  pages 1832--1845, 2009.

\bibitem{lsa}
J.~Yan and M.~Pollefeys.
\newblock A general framework for motion segmentation: Independent,
  articulated, rigid, non-rigid, degenerate and non-degenerate.
\newblock {\em ECCV}, 2006.

\bibitem{slbf}
T.~Zhang, A.~Szlam, Y.~Wang, and G.~Lerman.
\newblock Hybrid linear modeling via local best-fit flats.
\newblock {\em CVPR}, 2010.

\bibitem{llmc}
A.~Goh and R.~Vidal.
\newblock Segmenting motions of different types by unsupervised manifold
  clustering.
\newblock {\em CVPR}, 2007.

\bibitem{scc}
G.~Chen and G.~Lerman.
\newblock Spectral curvature clustering.
\newblock {\em International Journal of Computer Vision}, 81(3):317--330, 2009.

\bibitem{rsim}
P.~Ji, M.~Salzmann, and H.~Li.
\newblock Shape interaction matrix revisited and robustified: Efficient
  subspace clustering with corrupted and incomplete data.
\newblock In {\em ICCV}, pages 4687--4695, 2015.

\bibitem{ssccvpr}
E.~Elhamifar and R.~Vidal.
\newblock Sparse subspace clustering.
\newblock In {\em CVPR}, pages 2790--2797. IEEE, 2009.

\bibitem{sscpami}
E.~Elhamifar and R.~Vidal.
\newblock Sparse subspace clustering: Algorithm, theory, and applications.
\newblock {\em IEEE Transactions on Pattern Analysis and Machine Intelligence},
  35(11):2765--2781, 2013.

\bibitem{lrricml}
G.~Liu, Z.~Lin, and Y.~Yu.
\newblock Robust subspace segmentation by low-rank representation.
\newblock In {\em ICML}, pages 663--670, 2010.

\bibitem{lrrpami}
G.~Liu, Z.~Lin, S.~Yan, J.~Sun, Y.~Yu, and Y.~Ma.
\newblock Robust recovery of subspace structures by low-rank representation.
\newblock {\em IEEE Transactions on Pattern Analysis and Machine Intelligence},
  35(1):171--184, 2013.

\bibitem{chentoc2014}
J.~{Chen} and J.~{Yang}.
\newblock Robust subspace segmentation via low-rank representation.
\newblock {\em IEEE Transactions on Cybernetics}, 44(8):1432--1445, 2014.

\bibitem{fangtoc2016}
X.~{Fang}, Y.~{Xu}, X.~{Li}, Z.~{Lai}, and W.~K. {Wong}.
\newblock Robust semi-supervised subspace clustering via non-negative low-rank
  representation.
\newblock {\em IEEE Transactions on Cybernetics}, 46(8):1828--1838, 2016.

\bibitem{wangtoc2017}
J.~{Wang}, X.~{Wang}, F.~{Tian}, C.~H. {Liu}, and H.~{Yu}.
\newblock Constrained low-rank representation for robust subspace clustering.
\newblock {\em IEEE Transactions on Cybernetics}, 47(12):4534--4546, 2017.

\bibitem{pengtoc2017}
X.~{Peng}, Z.~{Yu}, Z.~{Yi}, and H.~{Tang}.
\newblock Constructing the l2-graph for robust subspace learning and subspace
  clustering.
\newblock {\em IEEE Transactions on Cybernetics}, 47(4):1053--1066, 2017.

\bibitem{wentoc2018}
Z.~{Wen}, B.~{Hou}, Q.~{Wu}, and L.~{Jiao}.
\newblock Discriminative transformation learning for fuzzy sparse subspace
  clustering.
\newblock {\em IEEE Transactions on Cybernetics}, 48(8):2218--2231, 2018.

\bibitem{wentoc2019}
J.~{Wen}, Y.~{Xu}, and H.~{Liu}.
\newblock Incomplete multiview spectral clustering with adaptive graph
  learning.
\newblock {\em IEEE Transactions on Cybernetics}, pages 1--12, 2019.

\bibitem{leetoc2018}
J.~{Lee}, H.~{Lee}, M.~{Lee}, and N.~{Kwak}.
\newblock Nonparametric estimation of probabilistic membership for subspace
  clustering.
\newblock {\em IEEE Transactions on Cybernetics}, pages 1--14, 2018.

\bibitem{brbictoc2018}
M.~{Brbić} and I.~{Kopriva}.
\newblock $\ell_{0}$-motivated low-rank sparse subspace clustering.
\newblock {\em IEEE Transactions on Cybernetics}, pages 1--15, 2018.

\bibitem{zhangtoc2019}
H.~{Zhang}, J.~{Yang}, F.~{Shang}, C.~{Gong}, and Z.~{Zhang}.
\newblock Lrr for subspace segmentation via tractable schatten- $p$ norm
  minimization and factorization.
\newblock {\em IEEE Transactions on Cybernetics}, 49(5):1722--1734, 2019.

\bibitem{yin2015dual}
M.~Yin, J.~Gao, Z.~Lin, Q.~Shi, and Y.~Guo.
\newblock Dual graph regularized latent low-rank representation for subspace
  clustering.
\newblock {\em IEEE Transactions on Image Processing}, 24(12):4918--4933, 2015.

\bibitem{lrsc}
P.~Favaro, R.~Vidal, and A.~Ravichandran.
\newblock A closed form solution to robust subspace estimation and clustering.
\newblock In {\em CVPR}, 2011.

\bibitem{Karavasilis}
V.~Karavasilis, K.~Blekas, and C.~Nikou.
\newblock A novel framework for motion segmentation and tracking by clustering
  incomplete trajectories.
\newblock {\em Computer Vision and Image Understanding}, 116(11):1135--1148,
  2012.

\bibitem{lsr}
C.~Lu, H.~Min, Z.~Zhao, L.~Zhu, D.~Huang, and S.~Yan.
\newblock Robust and efficient subspace segmentation via least squares
  regression.
\newblock In {\em ECCV}, pages 347--360. Springer, 2012.

\bibitem{smr}
H.~Hu, Z.~Lin, J.~Feng, and J.~Zhou.
\newblock Smooth representation clustering.
\newblock In {\em CVPR}, pages 3834--3841, 2014.

\bibitem{sssc}
X.~Peng, L.~Zhang, and Z.~Yi.
\newblock Scalable sparse subspace clustering.
\newblock In {\em CVPR}, June 2013.

\bibitem{bd}
J.~Feng, Z.~Lin, H.~Xu, and S.~Yan.
\newblock Robust subspace segmentation with block-diagonal prior.
\newblock In {\em CVPR}, pages 3818--3825, 2014.

\bibitem{wutoc2016}
F.~{Wu}, Y.~{Hu}, J.~{Gao}, Y.~{Sun}, and B.~{Yin}.
\newblock Ordered subspace clustering with block-diagonal priors.
\newblock {\em IEEE Transactions on Cybernetics}, 46(12):3209--3219, 2016.

\bibitem{mgr}
B.~Li, Y.~Zhang, Z.~Lin, and H.~Lu.
\newblock Subspace clustering by mixture of gaussian regression.
\newblock In {\em CVPR}, pages 2094--2102, 2015.

\bibitem{s3c}
C.~Li and R.~Vidal.
\newblock Structured sparse subspace clustering: A unified optimization
  framework.
\newblock In {\em CVPR}, pages 277--286, 2015.

\bibitem{s3ctip}
C.~Li, C.~You, and R.~Vidal.
\newblock Structured sparse subspace clustering: A joint affinity learning and
  subspace clustering framework.
\newblock {\em IEEE Transactions on Image Processing}, 26(6):2988--3001, June
  2017.

\bibitem{rssc}
J.~Xu, K.~Xu, K.~Chen, and J.~Ruan.
\newblock Reweighted sparse subspace clustering.
\newblock {\em Computer Vision and Image Understanding}, 138(0):25--37, 2015.

\bibitem{nvr3}
C.~Peng, Z.~Kang, and Q.~Cheng.
\newblock Subspace clustering via variance regularized ridge regression.
\newblock In {\em CVPR}, 2017.

\bibitem{ji2017deep}
P.~Ji, T.~Zhang, H.~Li, M.~Salzmann, and I.~Reid.
\newblock Deep subspace clustering networks.
\newblock In {\em NIPS}, pages 24--33, 2017.

\bibitem{alrg2018}
M.~Yin, S.~Xie, Z.~Wu, Y.~Zhang, and J.~Gao.
\newblock Subspace clustering via learning an adaptive low-rank graph.
\newblock {\em IEEE Transactions on Image Processing}, 27(8):3716--3728, Aug
  2018.

\bibitem{yang2018automatic}
J.~Yang, J.~Liang, K.~Wang, Y.~L. Yang, and M.~M. Cheng.
\newblock Automatic model selection in subspace clustering via triplet
  relationships.
\newblock In {\em AAAI}, 2018.

\bibitem{sscomp}
C.~You, D.~Robinson, and R.~Vidal.
\newblock Scalable sparse subspace clustering by orthogonal matching pursuit.
\newblock In {\em CVPR}, pages 3918--3927, 2016.

\bibitem{you2016oracle}
C.~You, C.~Li, D.~P. Robinson, and R.~Vidal.
\newblock Oracle based active set algorithm for scalable elastic net subspace
  clustering.
\newblock In {\em CVPR}, pages 3928--3937, 2016.

\bibitem{you2017provable}
C.~You, D.~P. Robinson, and R.~Vidal.
\newblock Provable self-representation based outlier detection in a union of
  subspaces.
\newblock In {\em CVPR}, pages 1--10, 2017.

\bibitem{You2018ECCV}
C.~You, C.~Li, D.~P. Robinson, and R.~Vidal.
\newblock Scalable exemplar-based subspace clustering on class-imbalanced data.
\newblock In {\em ECCV}, September 2018.

\bibitem{von2007tutorial}
U.~Von~Luxburg.
\newblock A tutorial on spectral clustering.
\newblock {\em Statistics and Computing}, 17(4):395--416, 2007.

\bibitem{nmfnature}
D.~Lee and H.~Seung.
\newblock Learning the parts of objects by non-negative matrix factorization.
\newblock {\em Nature}, 401(6755):788--791, 1999.

\bibitem{mnist}
Y.~LeCun, L.~Bottou, Y.~Bengio, and P.~Haffner.
\newblock Gradient-based learning applied to document recognition.
\newblock {\em Proceedings of the IEEE}, 86(11):2278--2324, 1998.

\bibitem{lasso}
R.~Tibshirani.
\newblock Regression shrinkage and selection via the lasso.
\newblock {\em Journal of the Royal Statistical Society. Series B
  (Methodological)}, 58(1):267--288, 1996.

\bibitem{nnlrs}
L.~Zhuang, H.~Gao, Z.~Lin, Y.~Ma, X.~Zhang, and N.~Yu.
\newblock Non-negative low rank and sparse graph for semi-supervised learning.
\newblock In {\em CVPR}, pages 2328--2335. IEEE, 2012.

\bibitem{admm}
S.~Boyd, N.~Parikh, E.~Chu, B.~Peleato, and J.~Eckstein.
\newblock Distributed optimization and statistical learning via the alternating
  direction method of multipliers.
\newblock {\em Foundations and Trends in Machine Learning}, 3(1):1--122,
  January 2011.

\bibitem{benchmark}
R.~Tron and R.~Vidal.
\newblock A benchmark for the comparison of 3d motion segmentation algorithms.
\newblock In {\em CVPR}, 2007.

\bibitem{YaleB}
A.~S. Georghiades, P.~N. Belhumeur, and D.~J. Kriegman.
\newblock From few to many: Illumination cone models for face recognition under
  variable lighting and pose.
\newblock {\em IEEE Transactions on Pattern Analysis and Machine Intelligence},
  23(6):643--660, 2001.

\bibitem{Ellis}
A.~Ellis and J.~Ferryman.
\newblock Biologically-inspired robust motion segmentation using mutual
  information.
\newblock {\em Computer Vision and Image Understanding}, 122:47--64, 2014.

\bibitem{uncertainty}
E.~Cand$\grave{e}$s, J.~Romberg, and T.~Tao.
\newblock Robust uncertainty principles: Exact signal reconstruction from
  highly incomplete frequency information.
\newblock {\em IEEE Transaction on Information Theory}, 52(2):489--509, 2006.

\bibitem{donohocs}
D.~Donoho.
\newblock Compressed sensing.
\newblock {\em IEEE Transaction on Information Theory}, 52(4):1289--1306, 2006.

\bibitem{shi2000normalized}
J.~Shi and J.~Malik.
\newblock Normalized cuts and image segmentation.
\newblock {\em IEEE Transactions on Pattern Analysis and Machine Intelligence},
  22(8):888--905, 2000.

\bibitem{peng2017integrating}
C.~Peng, Z.~Kang, and Q.~Cheng.
\newblock Integrating feature and graph learning with low-rank representation.
\newblock {\em Neurocomputing}, 249:106--116, 2017.

\bibitem{peng2015robust}
X.~Peng, Z.~Yi, and H.~Tang.
\newblock Robust subspace clustering via thresholding ridge regression.
\newblock In {\em AAAI}, 2015.

\bibitem{pgpd}
J.~Xu, L.~Zhang, W.~Zuo, D.~Zhang, and X.~Feng.
\newblock Patch group based nonlocal self-similarity prior learning for image
  denoising.
\newblock {\em IEEE International Conference on Computer Vision (ICCV)}, pages
  244--252, 2015.

\bibitem{mcwnnm}
J.~Xu, L.~Zhang, D.~Zhang, and X.~Feng.
\newblock Multi-channel weighted nuclear norm minimization for real color image
  denoising.
\newblock In {\em ICCV}, 2017.

\bibitem{guide}
J.~Xu, L.~Zhang, and D.~Zhang.
\newblock External prior guided internal prior learning for real-world noisy
  image denoising.
\newblock {\em IEEE Transactions on Image Processing}, 27(6):2996--3010, 2018.

\bibitem{Liang_2018_CVPR}
Zhetong Liang, Jun Xu, David Zhang, Zisheng Cao, and Lei Zhang.
\newblock A hybrid l1-l0 layer decomposition model for tone mapping.
\newblock In {\em CVPR}, June 2018.

\bibitem{twsc}
J.~Xu, L.~Zhang, and D.~Zhang.
\newblock A trilateral weighted sparse coding scheme for real-world image
  denoising.
\newblock In {\em ECCV}, 2018.

\bibitem{peng2015logrank}
C.~Peng, Z.~Kang, H.~Li, and Q.~Cheng.
\newblock Subspace clustering using log-determinant rank approximation.
\newblock In {\em SIGKDD}, pages 925--934. ACM, 2015.

\bibitem{zhuang2015constructing}
L.~Zhuang, S.~Gao, J.~Tang, J.~Wang, Z.~Lin, Y.~Ma, and N.~Yu.
\newblock Constructing a nonnegative low-rank and sparse graph with
  data-adaptive features.
\newblock {\em IEEE Transactions on Image Processing}, 24(11):3717--3728, 2015.

\bibitem{kangkernel}
Z.~Kang, C.~Peng, and Q.~Cheng.
\newblock Kernel-driven similarity learning.
\newblock {\em Neurocomputing}, 267:210--219, 2017.

\bibitem{kanglowrankkernel}
Z.~Kang, L.~Wen, W.~Chen, and Z.~Xu.
\newblock Low-rank kernel learning for graph-based clustering.
\newblock {\em Knowledge-Based Systems}, 163:510 -- 517, 2019.

\bibitem{kang2019clustering}
Z.~Kang, H.~Xu, B.~Wang, H.~Zhu, and Z.~Xu.
\newblock Clustering with similarity preserving.
\newblock {\em Neurocomputing}, 2019.

\bibitem{xunrc2019}
J.~Xu, W.~An, L.~Zhang, and D.~Zhang.
\newblock Sparse, collaborative, or nonnegative representation: Which helps
  pattern classification?
\newblock {\em Pattern Recognition}, 88:679--688, 2019.

\bibitem{li2018affinity}
Q.~Li, W.~Liu, and L.~Li.
\newblock Affinity learning via a diffusion process for subspace clustering.
\newblock {\em Pattern Recognition}, 84:39--50, 2018.

\bibitem{Huang2015ANS}
J.~Huang, F.~Nie, and H.~Huang.
\newblock A new simplex sparse learning model to measure data similarity for
  clustering.
\newblock In {\em IJCAI}, 2015.

\bibitem{courant1943}
R.~Courant.
\newblock Variational methods for the solution of problems of equilibrium and
  vibrations.
\newblock {\em Bulletin of the American Mathematical Society}, 49(1):1--23,
  1943.

\bibitem{Eckstein1992}
J.~Eckstein and D.~P. Bertsekas.
\newblock On the {D}ouglas--{R}achford splitting method and the proximal point
  algorithm for maximal monotone operators.
\newblock {\em Mathematical Programming}, 55(1):293--318, 1992.

\bibitem{Woodbury}
K.~Riedel.
\newblock A sherman-morrison-woodbury identity for rank augmenting matrices
  with application to centering.
\newblock {\em SIAM Journal on Matrix Analysis and Applications},
  13(2):659--662, 1992.

\bibitem{Nocedal2006}
J.~Nocedal and S.~J. Wright.
\newblock {\em Numerical Optimization}.
\newblock Springer, New York, 2nd edition, 2006.

\bibitem{ds3}
E.~Elhamifar, G.~Sapiro, and S.~Sastry.
\newblock Dissimilarity-based sparse subset selection.
\newblock {\em IEEE Transactions on Pattern Analysis and Machine Intelligence},
  38(11):2182--2197, 2016.

\bibitem{Michelot1986}
C.~Michelot.
\newblock A finite algorithm for finding the projection of a point onto the
  canonical simplex of rn.
\newblock {\em Journal of Optimization Theory and Applications},
  50(1):195--200, July 1986.

\bibitem{duchi2008efficient}
J.~Duchi, S.~Shalev-Shwartz, Y.~Singer, and T.~Chandra.
\newblock Efficient projections onto the l 1-ball for learning in high
  dimensions.
\newblock In {\em ICML}, pages 272--279. ACM, 2008.

\bibitem{Condat2016}
L.~Condat.
\newblock Fast projection onto the simplex and the $\ell_{1}$ ball.
\newblock {\em Mathematical Programming}, 158(1):575--585, Jul 2016.

\bibitem{ng2001spectral}
A.~Ng, M.~Jordan, and Y.~Weiss.
\newblock On spectral clustering analysis and an algorithm.
\newblock In {\em NIPS}, volume~14, pages 849--856, 2001.

\bibitem{orl}
F.~S. Samaria and A.~Harter.
\newblock Parameterisation of a stochastic model for human face identification.
\newblock In {\em IEEE Workshop on Applications of Computer Vision}, pages
  138--142. IEEE, 1994.

\bibitem{emnist}
G.~Cohen, S.~Afshar, J.~Tapson, and A.~Schaik.
\newblock {EMNIST:} an extension of {MNIST} to handwritten letters.
\newblock {\em CoRR}, abs/1702.05373, 2017.

\bibitem{pca}
I.~Jolliffe.
\newblock Principal component analysis and factor analysis.
\newblock In {\em Principal Component Analysis}, pages 115--128. Springer,
  1986.

\bibitem{scn}
J.~Bruna and S.~Mallat.
\newblock Invariant scattering convolution networks.
\newblock {\em IEEE Transactions on Pattern Analysis and Machine Intelligence},
  35(8):1872--1886, 2013.

\bibitem{rgc2019}
Z.~Kang, H.~Pan, S.~C.~H. Hoi, and Z.~Xu.
\newblock Robust graph learning from noisy data.
\newblock {\em IEEE Transactions on Cybernetics}, 2019.

\bibitem{peng2019discriminative}
C.~Peng and Q.~Cheng.
\newblock Discriminative regression machine: A classifier for high-dimensional
  data or imbalanced data, 2019.

\bibitem{Peng_2019_CVPR}
C.~Peng, C.~Chen, Z.~Kang, J.~Li, and Q.~Cheng.
\newblock Res-pca: A scalable approach to recovering low-rank matrices.
\newblock In {\em CVPR}, June 2019.

\end{thebibliography}
}

\vspace{-10mm}
\begin{IEEEbiography}[{\includegraphics[width=1in,height=1.25in,clip,keepaspectratio]{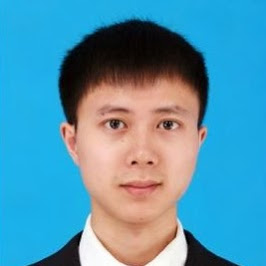}}]{Jun Xu} received his B.Sc. and M.Sc. degrees in Mathematics from the School of Mathematics Science, Nankai University, Tianjin, China, in 2011 and 2014, respectively. He received his Ph.D degree in Department of Computing, The Hong Kong Polytechnic University, Hong Kong SAR, China, in 2018. He was a Research Scientist in Inception Institute of Artificial Intelligence, Abu Dhabi, UAE, from 2018 to 2019. He is currently an Assistant Professor in College of Computer Science, Nankai University, Tianjin, China. 
\end{IEEEbiography}

\vspace{-10mm}
\begin{IEEEbiography}[{\includegraphics[width=1in,height=1.25in,clip,keepaspectratio]{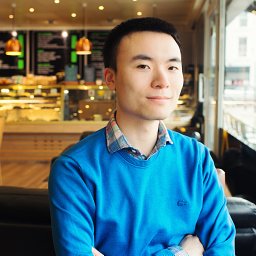}}]{Mengyang Yu} received the B.Sc. and M.Sc. degrees from the School of Mathematical Sciences, Peking University, Beijing, China, in 2010 and 2013, respectively. He received his Ph.D. degree from the Department of Computer Science and Digital Technologies, Northumbria University, Newcastle upon Tyne, U.K., in 2017. He was a Postdoc in ETH Zurich from 2017-2018, working with Prof. Luc Van Gool. He is now a Research Scientist in Inception Institute of Artificial Intelligence, Abu Dhabi, UAE. His research interests include computer vision, machine learning, and data mining.
\end{IEEEbiography}

\vspace{-10mm}
\begin{IEEEbiography}[{\includegraphics[width=1in,height=1.25in,clip,keepaspectratio]{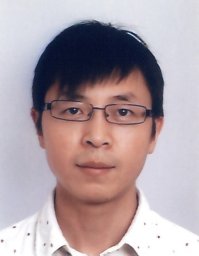}}]{Ling Shao} (M'09, SM'10) was a Chair Professor with School of Computing Sciences, University of East Anglia, Norwich, U.K., from 2016 to 2018, a Professor with Northumbria University from 2014 to 2016, a Senior Lecturer with the Department of Electronic and Electrical Engineering, The University of Sheffield, from 2009 to 2014, and a Senior Scientist with Philips Research, The Netherlands, from 2005 to 2009. He is currently the CEO and Chief Scientist at Inception Institute of Artificial Intelligence, Abu Dhabi, UAE. His research interests include computer vision, image/video processing, and machine learning. He is a fellow of the British Computer Society and the Institution of Engineering and Technology. He is an Associate Editor of IEEE Transactions on Image Processing, IEEE Transactions on Neural Networks and Learning Systems, IEEE Transactions on Circuits and Systems for Video Technology, and several other journals.
\end{IEEEbiography}

\vspace{-10mm}
\begin{IEEEbiography}[{\includegraphics[width=1in,height=1.25in,clip,keepaspectratio]{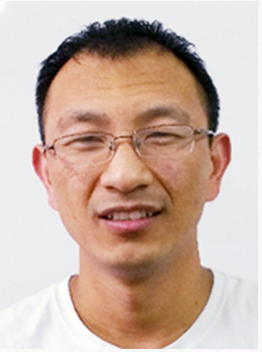}}]{Wangmeng Zuo} (M'09, SM'15) received the Ph.D. degree in computer application technology from the Harbin Institute of Technology, Harbin, China, in 2007. He is currently a Professor in the School of Computer Science and Technology, Harbin Institute of Technology. His current
research interests include image enhancement
and restoration, object detection, visual tracking,
and image classification. He has published over
100 papers in top-tier academic journals and conferences. He has served as a Tutorial Organizer
in ECCV 2016, an Associate Editor of the IET Biometrics and Journal of Electronic Imaging, and the Guest Editor of Neurocomputing, Pattern
Recognition, IEEE Transactions on Circuits and Systems for Video Technology, and IEEE Transactions on Neural Networks and Learning Systems. As of 2019, his publications have been cited more than 10,000 times in the literature. 
\end{IEEEbiography}

\vspace{-10mm}
\begin{IEEEbiography}[{\includegraphics[width=1in,height=1.25in,clip,keepaspectratio]{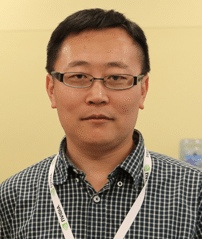}}]{Deyu Meng} received the B.S., M.S., and Ph.D. degrees from Xi'an Jiaotong University, Xian, China, in 2001, 2004, and 2008, respectively. From 2012 to 2014, he took his two-year sabbatical leave in Carnegie Mellon University. He is currently a Professor with the Institute for Information and System Sciences, School of Mathematics and Statistics, Xi'an Jiaotong University. His current research interests include
self-paced learning, noise modeling, and tensor sparsity.
\end{IEEEbiography}

\vspace{-10mm}
\begin{IEEEbiography}[{\includegraphics[width=1in,height=1.25in,clip,keepaspectratio]{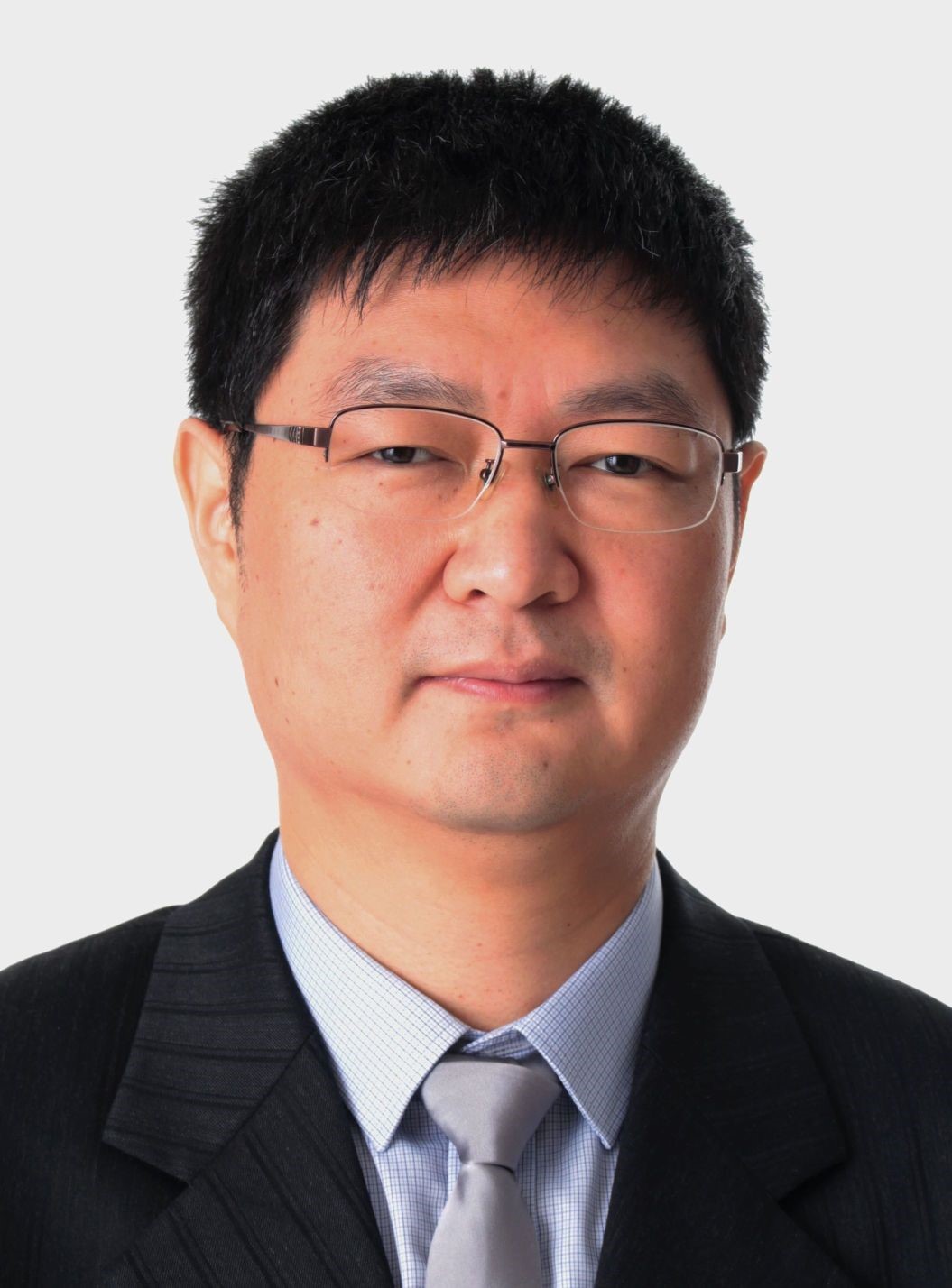}}]{Lei Zhang} (M'04, SM'14, F'18) received his B.Sc. degree in 1995 from Shenyang Institute of Aeronautical Engineering, Shenyang, P.R. China, and M.Sc. and Ph.D degrees in Control Theory and Engineering from Northwestern Polytechnical University, Xi'an, P.R. China, respectively in 1998 and 2001, respectively. From 2001 to 2002, he was a research associate in the Department of Computing, The Hong Kong Polytechnic University. From January 2003 to January 2006 he worked as a Postdoctoral Fellow in the Department of Electrical and Computer Engineering, McMaster University, Canada. In 2006, he joined the Department of Computing, The Hong Kong Polytechnic University, as an Assistant Professor. Since July 2017, he has been a Chair Professor in the same department. His research interests include Computer Vision, Pattern Recognition, Image and Video Analysis, and Biometrics, etc. Prof. Zhang has published more than 200 papers in those areas. As of 2019, his publications have been cited more than 43,000 times in the literature. Prof. Zhang is an Associate Editor of \textsl{IEEE Trans. on Image Processing}, \textsl{SIAM Journal of Imaging Sciences} and \textsl{Image and Vision Computing}, etc. He is a ``Clarivate Analytics Highly Cited Researcher'' from 2015 to 2018. More information can be found in his homepage \url{http://www4.comp.polyu.edu.hk/~cslzhang/}.
\end{IEEEbiography}

\vspace{-10mm}
\begin{IEEEbiography}[{\includegraphics[width=1in,height=1.25in,clip,keepaspectratio]{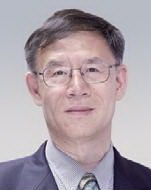}}]{David Zhang} (F'08) received the degree in computer science from Peking University, the M.Sc. degree in 1982, and the Ph.D. degree in computer science from the Harbin Institute of Technology (HIT), in 1985, respectively. From 1986 to 1988, he was a Post-Doctoral Fellow with Tsinghua University and an Associate Professor with the Academia Sinica, Beijing. In 1994, he received the second Ph.D. degree in electrical and computer engineering from the University of Waterloo, Ontario, Canada. He is currently the Chair Professor with the Hong Kong Polytechnic University, since 2005, where he is the Founding Director of the Biometrics Research Centre (UGC/CRC) supported by the Hong Kong SAR Government in 1998. He is a Croucher Senior Research Fellow, Distinguished Speaker of the IEEE Computer Society, and a Fellow of IAPR. So far, he has published over 20 monographs, over 400 international journal papers and over 40 patents from USA/Japan/HK/China. He was selected as a Highly Cited Researcher in Engineering by Thomson Reuters in 2014, 2015, and 2016, respectively. He also serves as a Visiting Chair Professor with Tsinghua University and an Adjunct Professor with Peking University, Shanghai Jiao Tong University, HIT, and the University of Waterloo. He is Founder and Editor-in-Chief, International Journal of Image and Graphics, the Founder and the Series Editor, Springer International Series on Biometrics (KISB); Organizer, International Conference on Biometrics Authentication, an Associate Editor for over ten international journals including the IEEE Transactions and so on.
\end{IEEEbiography}

\end{document}


\title{Supplementary file to
\\ ``Scaled Simplex Representation for Subspace Clustering''}

\author{
\IEEEauthorblockN{
Jun Xu$^{1,2}$,
Mengyang Yu$^{2}$,
Ling Shao$^{2}$,~\IEEEmembership{Senior Member,~IEEE},
Wangmeng Zuo$^{3}$,~\IEEEmembership{Senior Member,~IEEE},
\\
Deyu Meng$^{4}$,
Lei Zhang\textsuperscript{5},~\IEEEmembership{Fellow,~IEEE},
David Zhang\textsuperscript{5,6},~\IEEEmembership{Fellow,~IEEE}
}
\IEEEauthorblockA{
$^{1}$College of Computer Science, Nankai Univeristy, Tianjin, China
\\
$^{2}$Inception Institute of Artificial Intelligence, Abu Dhabi, UAE
\\
$^{3}$School of Computer Science and Technology, Harbin Institute of Technology, Harbin, China
\\
$^{4}$School of Mathematics and Statistics, Xi'an Jiaotong University, Xi'an, China
\\
$^{5}$Department of Computing,
The Hong Kong Polytechnic University, Hong Kong SAR, China
\\
$^{6}$School of Science and Engineering, The Chinese University of Hong Kong (Shenzhen), Shenzhen, China
}
\thanks{Corresponding author: Jun Xu (email: nankaimathxujun@gmail.com).}
}

\maketitle

\section{Solution of the NLSR Model}
\label{appendixa}
The NLSR model (Eqn.~(20) in the main paper) does not have an analytical solution. We employ a  variable splitting method \cite{courant1943,Eckstein1992} to solve it. By introducing an auxiliary variable $\bm{Z}$, we can reformulate the NLSR model into a linear equality-constraint problem with two variables $\bm{C}$ and $\bm{Z}$:
\begin{equation}
\begin{split}
\label{e-renlsr}
&
\min_{\bm{C},\bm{Z}}
\|
\bm{X}
-
\bm{X}\bm{C}
\|_{F}^{2}
+
\lambda
\|\bm{C}\|_{F}^{2}
\quad 
\text{s.t.}
\quad
\bm{Z}=\bm{C}
,
\bm{Z}\ge0
.
\end{split}
\end{equation}
Since the objective function is separable w.r.t. the variables $\bm{C}$ and $\bm{Z}$, problem (\ref{e-renlsr}) can be solved under the alternating direction method of multipliers (ADMM)~\cite{admm} framework. The Lagrangian function of the problem (\ref{e-renlsr}) is 
\begin{equation}
\begin{split}
\label{e-Lagnlsr}
\mathcal{L}(\bm{C},\bm{Z},\bm{\Delta},\lambda,\rho)
=
&
\|
\bm{X}
-
\bm{X}\bm{C}
\|_{F}^{2}
+
\lambda
\|\bm{C}\|_{F}^{2}
\\
&
+
\langle
\bm{\Delta},\bm{Z}-\bm{C}
\rangle
+
\frac{\rho}{2}
\|
\bm{Z}
-
\bm{C}
\|_{F}^{2}
,
\end{split}
\end{equation}
where $\bm{\Delta}$ is the augmented Lagrangian multiplier and $\rho>0$ is the penalty parameter.\ We initialize the vector variables $\bm{C}_{0}$, $\bm{Z}_{0}$, and $\bm{\Delta}_{0}$ to be conformable zero matrices and set $\rho>0$ with a suitable value.\ Denote by ($\bm{C}_{k}$, $\bm{Z}_{k}$) and $\bm{\delta}_{k}$ the optimization variables and the Lagrange multiplier at iteration $k$ ($k = 0, 1, 2,..., K$), respectively. The variables can be updated by taking derivatives of the Lagrangian function (\ref{e-Lagnlsr}) w.r.t. the variables $\bm{C}$ and $\bm{Z}$ and setting them to be zero.
\vspace{1mm}
\\
(1) \textbf{Updating $\bm{C}$ while fixing $\bm{Z}$ and $\bm{\Delta}$}:
\begin{equation}
\begin{split}
\label{e-nlsrC}
&
\min_{\bm{C}}
\|
\bm{X}
-
\bm{X}\bm{C}
\|_{F}^{2}
+
\lambda
\|\bm{C}\|_{F}^{2}
+
\frac{\rho}{2}
\|
\bm{C}
-
(
\bm{Z}_{k}
+
\rho^{-1}
\bm{\Delta}_{k}
)
\|_{F}^{2}
.
\end{split}
\end{equation}
This is a standard least squares regression problem with closed form solution:
\begin{equation}
\begin{split}
\label{e-solvenlsrC}
\bm{C}_{k+1}
=
(\bm{X}^{\top}\bm{X}+\frac{2\lambda+\rho}{2}\bm{I})^{-1}
(\bm{X}^{\top}\bm{X}+\frac{\rho}{2}\bm{Z}_{k}+\frac{1}{2}\bm{\Delta}_{k})
\end{split}
\end{equation}
\\
(2) \textbf{Updating $\bm{Z}$ while fixing $\bm{C}$ and $\bm{\Delta}$}:
\begin{equation}
\begin{split}
\label{e-nlsrZ}
&
\min_{\bm{Z}}
\|
\bm{Z}
-
(
\bm{C}_{k+1}
-
\rho^{-1}
\bm{\Delta}_{k}
)
\|_{F}^{2}
\quad 
\text{s.t.}
\quad
\bm{Z}\ge0
.
\end{split}
\end{equation}
The solution of $\bm{Z}$ is
\begin{equation}
\begin{split}
\label{e-solvenlsrZ}
\bm{Z}_{k+1}=\max(0,\bm{C}_{k+1}-\rho^{-1}\bm{\Delta}_{k}),
\end{split}
\end{equation}
where the ``$\max(\cdot)$'' operator outputs element-wisely the maximal value of the inputs.
\vspace{1mm}
\\
(3) \textbf{Updating the Lagrangian multiplier $\bm{\Delta}$}:
\begin{equation}
\begin{split}
\label{e27}
\bm{\Delta}_{k+1}
&
=
\bm{\Delta}_{k}
+
\rho
(\bm{Z}_{k+1}-\bm{C}_{k+1})
.
\end{split}
\end{equation}

The above alternative updating steps are repeated until the convergence condition is satisfied or the number of iterations exceeds a preset threshold $K$.\ The convergence condition of the ADMM algorithm is: $\|\bm{Z}_{k+1}-\bm{C}_{k+1}\|_{F}\le \text{Tol}$, $\|\bm{C}_{k+1}-\bm{C}_{k}\|_{F}\le \text{Tol}$, and $\|\bm{Z}_{k+1}-\bm{Z}_{k}\|_{F}\le \text{Tol}$ are simultaneously satisfied, where $\text{Tol}>0$ is a small tolerance value. Since the objective function and constraints are all strictly convex, the NLSR model solved by the ADMM algorithm~\cite{admm} is guaranteed to converge to a global optimal solution.

\section{Solution of the SLSR Model}
\label{appendixb}
We solve the SLSR model (Eqn.~(21) in the main paper) by employing variable splitting methods~\cite{courant1943,Eckstein1992}. Specifically, we introduce an auxiliary variable $\bm{Z}$ into the SLSR model, which can then be equivalently reformulated as a linear equality-constrained problem:
\begin{equation}
\begin{split}
\label{e30}
\min_{\bm{C},\bm{Z}}
&
\|
\bm{X}
-
\bm{X}\bm{C}
\|_{F}^{2}
+
\lambda
\|
\bm{Z}
\|
_{F}^{2}
\\
\text{s.t.}
\quad
&
\bm{1}^{\top}\bm{Z}=s\bm{1}^{\top}
,
\bm{Z}
=
\bm{C}
,
\end{split}
\end{equation}
whose solution for $\bm{C}$ coincides with the solution of Eqn.~(20) in the main paper. Since its objective function is separable w.r.t. the variables $\bm{C}$ and $\bm{Z}$, problem (\ref{e30}) can also be solved via the ADMM method~\cite{admm}. The corresponding augmented Lagrangian function is the same as in Eqn.~(11) in the main paper.\ Denote by ($\bm{C}_{k}, \bm{Z}_{k}$) and $\bm{\Delta}_{k}$ the optimization variables and Lagrange multiplier at iteration $k$ ($k=0,1,2,...$), respectively. We initialize the variables $\bm{C}_{0}$, $\bm{Z}_{0}$, and $\bm{\Delta}_{0}$ to be conformable zero matrices. By taking derivatives of the Lagrangian function $\mathcal{L}$ (Eqn.~(11) in the main paper) w.r.t. $\bm{C}$ and $\bm{Z}$, and setting them to be zeros, we can alternatively update the variables as follows:
\vspace{1mm}
\\
(1) \textbf{Updating $\bm{C}$ while fixing $\bm{Z}_{k}$ and $\bm{\Delta}_{k}$}:
\begin{equation}
\begin{split}
\label{e31}
\bm{C}_{k+1}
=
\arg\min_{\bm{C}}
\|
\bm{X}
-
\bm{X}\bm{C}
\|
_{F}^{2}
+
\frac{\rho}{2}
\|
\bm{C}
-
(
\bm{Z}_k
+
\frac{1}{\rho}
\bm{\Delta}_k
)
\|
_{F}^{2}
.
\end{split}
\end{equation}
This is a standard least square regression problem and has a closed-from solution given by
\begin{equation}
\begin{split}
\label{e32}
&
\hspace{-3mm}
\bm{C}_{k+1}
=
(\bm{X}^{\top}\bm{X}+\frac{\rho}{2}\bm{I})^{-1}
(\bm{X}^{\top}\bm{X}+\frac{\rho}{2}\bm{Z}_{k}+\frac{1}{2}\bm{\Delta}_{k})
.
\end{split}
\end{equation}
\\
(2) \textbf{Updating $\bm{Z}$ while fixing $\bm{C}_{k}$ and $\bm{\Delta}_{k}$}:
\begin{equation}
\begin{split}
\label{e33}
\bm{Z}_{k+1}
&
=
\arg\min_{\bm{Z}}
\|
\bm{Z}
-
\frac{\rho}{2\lambda+\rho}
(
\bm{C}_{k+1}-\rho^{-1}\bm{\Delta}_{k}
)
\|_{F}^{2}
\\
&
\quad
\text{s.t.}
\quad 
\bm{1}^{\top}\bm{Z}=s\bm{1}^{\top}
.
\end{split}
\end{equation}
This is a quadratic programming problem and the objective function is strictly convex, with a close and convex constraint, so there is a unique solution. Here, we employ the projection based method~\cite{duchi2008efficient}, whose computational complexity is $\mathcal{O}(N\log{N})$ to process a vector of length $N$. Denote by $\bm{v}_{k+1}$ an arbitrary column of $\frac{\rho}{2\lambda+\rho}(\bm{C}_{k+1}-\rho^{-1}\Delta_k)$, the solution of $\bm{z}_{k+1}$ (the corresponding column in $\bm{Z}_{k+1}$) can be solved by projecting $\bm{v}_{k+1}$ onto a scaled affine space~\cite{duchi2008efficient}. The solution of problem (\ref{e33}) is summarized in Algorithm 4.
\begin{table}[t]
\centering
\begin{tabular}{l}
\Xhline{1pt}
\textbf{Algorithm 4}: Projection of the vector $\bm{v}_{k+1}$ onto a scaled affine space
\\
\hline
\textbf{Input:} Data point $\bm{v}_{k+1}\in\mathbb{R}^{N}$, scalar $s$.
\\
1. Sort $\bm{v}_{k+1}$ into $\bm{w}$: $w_1\ge w_2\ge ...\ge w_N$;
\\
2. Find $\alpha=\max\{1\le j\le N: w_j+\frac{1}{j}(s-\sum_{i=1}^{j}w_i)>0\}$;
\\
3. Define $\beta=\frac{1}{\alpha}(s-\sum_{i=1}^{\alpha}w_i)$;
\\
\textbf{Output:} $\bm{z}_{k+1}$: $z_{k+1}^i=v_{k+1}^i+\beta$, $i=1,...,N$. 
\\
\Xhline{1pt}
\end{tabular}
\label{a1}
\end{table}
\vspace{1mm}
\\
(3) \textbf{Updating $\bm{\Delta}$ while fixing $\bm{C}_{k}$ and $\bm{Z}_{k}$}:
\begin{equation}
\begin{split}
\label{e32}
\bm{\Delta}_{k+1}
&
=
\bm{\Delta}_{k}
+
\rho
(\bm{Z}_{k+1}-\bm{C}_{k+1})
.
\end{split}
\end{equation}

We repeat the above alternative updates until a certain convergence condition is satisfied or the number of iterations reaches a preset threshold $K$.\ The convergence condition of the ADMM algorithm is met when $\|\bm{C}_{k+1}-\bm{Z}_{k+1}\|_{F}\le \text{Tol}$, $\|\bm{C}_{k+1}-\bm{C}_{k}\|_{F}\le \text{Tol}$, and $\|\bm{Z}_{k+1}-\bm{Z}_{k}\|_{F}\le \text{Tol}$ are simultaneously satisfied, where $\text{Tol}>0$ is a small tolerance value.\ Since the objective function and constraints are convex, the SLSR model solved by the ADMM algorithm, is guaranteed to converge to a global optimal solution.

{
\small
\balance
\bibliographystyle{unsrt}
\bibliography{SSRSC}
}
